\documentclass{article}

\PassOptionsToPackage{numbers, compress}{natbib}
\usepackage[preprint]{neurips_2026}

\usepackage[utf8]{inputenc} 
\usepackage[T1]{fontenc}    
\usepackage{hyperref}       
\usepackage{url}            
\usepackage{booktabs}       
\usepackage{amsfonts}       
\usepackage{nicefrac}       
\usepackage{microtype}      
\usepackage{xcolor}         

\hypersetup{
  colorlinks=true,
  linkcolor=red,
  citecolor=green,
  urlcolor=cyan
}

\usepackage{amsmath}
\usepackage{amssymb}
\usepackage{mathtools}
\usepackage{amsthm}
\usepackage{enumitem}

\usepackage{wrapfig}

\theoremstyle{plain}
\newtheorem{proposition}{Proposition}[section]

\title{Extending Kernel Trick to Influence Functions}

\author{%
  Zhenhuan Sun \\
  University of Toronto\\
  \texttt{zhsun@ece.utoronto.ca} \\
  \And
  Shahrokh Valaee \\
  University of Toronto \\
  \texttt{valaee@ece.utoronto.ca} \\
}

\begin{document}

\maketitle

\begin{abstract}

In this paper, we present a dual representation of the influence functions, whose computational complexity scales with dataset size rather than model size. Both analytically and experimentally, we show that this representation can be an efficient alternative to the original influence functions for estimating changes in parameters, model outputs and loss due to data point removal, when model size is large relative to dataset size, or when evaluating the original influence functions in parameter space is infeasible. The dual representation, however, is limited to linearizable models, which are models whose behavior can be approximated by their linearizations throughout training, and requires materializing a matrix, whose size grows with the product of model output dimension and dataset size.
  
\end{abstract}

\section{Introduction}

Interpretability and privacy are critical considerations in many machine learning applications. However, many modern machine learning models, particularly deep neural networks, lack transparency and may inadvertently memorize undesired information or expose sensitive information to users. This creates challenges for understanding model behavior, and raises concerns about the reliability of model predictions and the risk of exposing private information. As a result, there is growing interest in developing methods that improve interpretability and provide mechanisms that remove the influence of sensitive or undesirable information from trained models.

Influence functions, originally developed in robust statistics and later introduced to machine learning by \citet{koh2017understanding}, aim to address these issues by providing a way of estimating how model parameters and parameter-dependent functions change when training data points are perturbed. Due to this ability, influence functions have played a particularly prominent role in machine unlearning \cite{nguyen2025survey}, which is an emerging field that aims to efficiently remove the influence of specific data points from a trained model so that the resulting model behaves as if it had been retrained from scratch without those data points. Despite being successfully applied to the unlearning of small models \cite{guo2020certified}, influence function based unlearning methods cannot be readily scaled to large models due to the need to compute an inverse Hessian–vector product (IHVP), whose computational complexity grows with the number of model parameters and the cost of evaluating model.

Inspired by the dual formulation of linear regression, in this work, we introduce a dual representation of the influence functions for linearizable models, which shifts the computational complexity of IHVP from scaling with model size to scaling with dataset size. We show that the dual representation can be a more efficient alternative to the original representation when model size is large relative to dataset size, and that it provides a means of estimating changes in model outputs and loss due to data point removal for neural networks with infinitely many parameters.

\section{Preliminaries} \label{sec:Preliminaries}

This section introduces our notations and provides conceptual background needed for the development of the dual representation of influence functions. The derivations of the results in this section are provided in Appendix \ref{A:Derivations}.

Let \( \mathcal{D} \subseteq \mathcal{X} \times \mathcal{Y} \) be a training dataset of input-output pairs \( (\mathbf{x} , \mathbf{y}) \), where \( \mathcal{X} \subseteq \mathbb{R}^{d_\text{in}} \) and \( \mathcal{Y} \subseteq \mathbb{R}^{d_\text{out}} \) denote input and output spaces. We represent \( \mathcal{D} \) compactly by the feature matrix \( \mathbf{X} \in \mathbb{R}^{|\mathcal{D}| \times d_\text{in}} \) and the target matrix \( \mathbf{Y} \in \mathbb{R}^{|\mathcal{D}| \times d_\text{out}} \). Let \( f: \mathbb{R}^{d_\text{in}} \times \mathbb{R}^{d_{\boldsymbol{\theta}}} \to \mathbb{R}^{d_\text{out}} \) be a parameterized machine learning model, and consider the regularized empirical risk over \( \mathcal{D} \)
\begin{equation*}
    \mathcal{L}_\mathcal{D}(\boldsymbol{\theta}) \coloneqq \frac{1}{|\mathcal{D}|} \sum_{(\mathbf{x}, \mathbf{y}) \in \mathcal{D}} \left( \ell(f(\mathbf{x}, \boldsymbol{\theta}), \mathbf{y}) + \frac{\lambda}{2} \| \boldsymbol{\theta} \|_2^2 \right),
\end{equation*}
where \( \lambda > 0 \) denotes a regularization constant and \( \ell: \mathbb{R}^{d_\text{out}} \times \mathbb{R}^{d_\text{out}} \to \mathbb{R}_{\geq 0} \) denotes a loss function. When \( \ell \) is convex in \( \boldsymbol{\theta} \), \( \mathcal{L}_\mathcal{D}(\boldsymbol{\theta}) \) is strictly convex and has a unique minimizer \( \boldsymbol{\theta}^\star \coloneqq \arg \, \min_{\boldsymbol{\theta} \in \mathbb{R}^{d_{\boldsymbol{\theta}}}} \mathcal{L}_{\mathcal{D}}(\boldsymbol{\theta}) \) for any dataset \( \mathcal{D} \). We denote the gradients of such empirical risk with respect to parameters and vectorized model outputs at all data points in \( \mathcal{D} \) as \( \nabla_{\boldsymbol{\theta}} \mathcal{L}_\mathcal{D}(\boldsymbol{\theta}) \) and \( \nabla_{f(\mathbf{X}, \boldsymbol{\theta}}) \mathcal{L}_{\mathcal{D}} \), respectively, where \( f(\mathbf{X}, \boldsymbol{\theta}) \coloneq \operatorname{vec}(\{ f(\mathbf{x}, \boldsymbol{\theta}) \}_{\mathbf{x} \in \mathcal{D}}) \in \mathbb{R}^{d_\text{out}|\mathcal{D}|} \).


\subsection{Influence Functions}
\label{subsec:Influence Functions}


Let \( \mathcal{D}_f\) denote a forget dataset, which contains data points that need to be deleted from \( \mathcal{D} \) and whose influence must be removed from \( \boldsymbol{\theta}^\star \). The remaining data points form a retain dataset, which is denoted as \( \mathcal{D}_r\). Assume, without loss of generality, that data points in \( \mathcal{D}_f \) are the first \( |\mathcal{D}_f| \) data points in \( \mathcal{D} \). Consider an upweighted version of \( \mathcal{L}_\mathcal{D}(\boldsymbol{\theta}) \), defined as \( \mathcal{L}_{\mathcal{D}}(\boldsymbol{\theta}, \epsilon) \coloneqq \mathcal{L}_{\mathcal{D}}(\boldsymbol{\theta}) + |\mathcal{D}_f|\mathcal{L}_{\mathcal{D}_f}(\boldsymbol{\theta}) \epsilon \), where the contributions of the data points in \( \mathcal{D}_f \) are upweighted by \( \epsilon \). \( \mathcal{L}_{\mathcal{D}}(\boldsymbol{\theta}, \epsilon) \) remains strictly convex and has a unique minimizer \( \boldsymbol{\theta}^\star(\epsilon) \coloneqq \arg \, \min_{\boldsymbol{\theta} \in \mathbb{R}^{d_{\boldsymbol{\theta}}}} \mathcal{L}_{\mathcal{D}}(\boldsymbol{\theta}, \epsilon) \) that satisfies the first-order necessary condition \( \nabla_{\boldsymbol{\theta}} \mathcal{L}_{\mathcal{D}}(\boldsymbol{\theta}^\star(\epsilon), \epsilon) = 0 \) for optimality. 

The idea of influence functions is that, instead of solving \( \boldsymbol{\theta}_r^\star \coloneqq \arg \, \min_{\boldsymbol{\theta} \in \mathbb{R}^{d_{\boldsymbol{\theta}}}} \mathcal{L}_{\mathcal{D}_r}(\boldsymbol{\theta}) \) from scratch, one can use the first-order necessary condition for optimality and the first-order Taylor expansion of \( \nabla_{\boldsymbol{\theta}} \mathcal{L}_{\mathcal{D}}(\boldsymbol{\theta}^\star(\epsilon), \epsilon) \) around \( \boldsymbol{\theta}^\star \) to obtain an approximation of \( \boldsymbol{\theta}^\star(\epsilon) \),
\begin{equation*}
    \boldsymbol{\theta}^\star(\epsilon) \approx \boldsymbol{\theta}^\star - \nabla_{\boldsymbol{\theta}}^2\mathcal{L}_\mathcal{D}(\boldsymbol{\theta}^\star, \epsilon)^{-1} \nabla_{\boldsymbol{\theta}} \mathcal{L}_\mathcal{D}(\boldsymbol{\theta}^\star, \epsilon),
\end{equation*}
and with the relation \( \boldsymbol{\theta}_r^\star = \boldsymbol{\theta}^\star(-\frac{1}{|\mathcal{D}|}) \), \( \boldsymbol{\theta}_r^\star  \) can be approximated by
\begin{equation}
    \boldsymbol{\theta}_r^\star \approx \boldsymbol{\theta}^\star + \frac{|\mathcal{D}_f|}{|\mathcal{D}|} \mathbf{H}_{\boldsymbol{\theta}^\star}^{-1} \nabla_{\boldsymbol{\theta}}\mathcal{L}_{\mathcal{D}_f}(\boldsymbol{\theta}^\star),
\label{eq:influence on theta}
\end{equation}
where \( \mathbf{H}_{\boldsymbol{\theta}^\star} \coloneqq \nabla_{\boldsymbol{\theta}}^2\mathcal{L}_\mathcal{D}(\boldsymbol{\theta}^\star, -\frac{1}{|\mathcal{D}|}) \in \mathbb{R}^{d_{\boldsymbol{\theta}} \times d_{\boldsymbol{\theta}}}\) is the Hessian of the upweighted empirical risk, and \( \mathbf{H}_{\boldsymbol{\theta}^\star}^{-1} \) may be further approximated by \( \nabla_{\boldsymbol{\theta}}^2\mathcal{L}_\mathcal{D}(\boldsymbol{\theta}^\star)^{-1} \) when \( \frac{|\mathcal{D}_f|}{|\mathcal{D}|} \) is small. In addition, for any differentiable function \( g: \mathbb{R}^{d_{\boldsymbol{\theta}}} \to \mathbb{R}^k \), \( g(\boldsymbol{\theta}_r^\star)  \) can be approximated by evaluating the first-order Taylor expansion of \( g(\boldsymbol{\theta}) \) around \( \boldsymbol{\theta}^\star \) at \( \boldsymbol{\theta}_r^\star \). For example, let \( \mathbf{z}_t \coloneq (\mathbf{x}_t, \mathbf{y}_t) \) denote a test data point, the change in model output and loss at \( \mathbf{z}_t \) caused by removing \( \mathcal{D}_f \) can be approximated by
\begin{align}
    &f(\mathbf{x}_t, \boldsymbol{\theta}_r^\star) - f(\mathbf{x}_t, \boldsymbol{\theta}^\star) \approx \nabla_{\boldsymbol{\theta}} f(\mathbf{x}_t, \boldsymbol{\theta}^\star) (\boldsymbol{\theta}_r^\star - \boldsymbol{\theta}^\star) \label{eq:change in output theta}\\
    &\mathcal{L}_{\{\mathbf{z}_t\}}(\boldsymbol{\theta}_r^\star) - \mathcal{L}_{\{\mathbf{z}_t\}}(\boldsymbol{\theta}^\star) \approx \nabla_{\boldsymbol{\theta}} \mathcal{L}_{\{\mathbf{z}_t\}}(\boldsymbol{\theta}^\star)^{\top} (\boldsymbol{\theta}_r^\star - \boldsymbol{\theta}^\star), \label{eq:change in loss theta}
\end{align}
where \( \boldsymbol{\theta}_r^\star - \boldsymbol{\theta}^\star \) is given by Equation \ref{eq:influence on theta}.

The computational bottleneck of Equation \ref{eq:influence on theta} is the IHVP: \( \mathbf{H}_{\boldsymbol{\theta}^\star}^{-1} \nabla_{\boldsymbol{\theta}}\mathcal{L}_{\mathcal{D}_f}(\boldsymbol{\theta}^\star) \). Since \( \mathbf{H}_{\boldsymbol{\theta}^\star} \) scales with model size, materializing it and computing its inverse is expensive for large models. Consequently, the IHVP is typically computed using iterative methods such as conjugate gradients (CG), which solve the linear system 
\begin{equation}
    \mathbf{H}_{\boldsymbol{\theta}^\star} \mathbf{s} = \frac{|\mathcal{D}_f|}{|\mathcal{D}|}\nabla_{\boldsymbol{\theta}}\mathcal{L}_{\mathcal{D}_f}(\boldsymbol{\theta}^\star)
    \label{eq:linear system in theta}
\end{equation}
relying solely on a linear operator \( \mathbf{x} \mapsto \mathbf{H}_{\boldsymbol{\theta}^\star} \mathbf{x} \) that computes Hessian–vector products (HVPs) via automatic differentiation, without materializing the Hessian matrix. Once \( \boldsymbol{\theta}_r^\star \) is obtained, Equations \ref{eq:change in output theta} and \ref{eq:change in loss theta} can be evaluated by computing the Jacobian–vector products (JVPs) via automatic differentiation, and evaluations at multiple test data points can be parallelized across GPUs.

\section{Methodology} \label{sec:Methodology}

This section derives the dual representation of influence functions for linearized models, and discusses the advantages of the dual representation. The proofs and derivations of the results in this section are provided in Appendices \ref{A:Proofs} and \ref{A:Derivations}.

\subsection{Dual Representation}
\label{subsec:Dual Representation}

Consider the linearization of \( f(\cdot, \boldsymbol{\theta}) \) around \( \boldsymbol{\theta}' \), defined as \( f^\text{lin}(\cdot, \boldsymbol{\theta}) \coloneq  f(\cdot, \boldsymbol{\theta}') + \nabla_{\boldsymbol{\theta}} f(\cdot, \boldsymbol{\theta}') (\boldsymbol{\theta} - \boldsymbol{\theta}') \), and suppose it is trained by minimizing the regularized empirical risk
\begin{equation*}
    \hat{\mathcal{L}}_\mathcal{D}(\boldsymbol{\theta}) \coloneqq \frac{1}{|\mathcal{D}|} \sum_{(\mathbf{x}, \mathbf{y}) \in \mathcal{D}} \left( \ell(f^\text{lin}(\mathbf{x}, \boldsymbol{\theta}), \mathbf{y}) + \frac{\lambda}{2} \| \boldsymbol{\theta} - \boldsymbol{\theta}' \|_2^2 \right).
\end{equation*}
Under this risk, the difference between the parameters obtained by training on any nonempty subset of \( \mathcal{D} \) and those obtained by training on \( \mathcal{D} \) lies in a linear subspace of \( \mathbb{R}^{d_{\boldsymbol{\theta}}} \). This observation is formalized in the following proposition.
\begin{proposition}
    \label{prop:3.1}
    For any nonempty \( \mathcal{D}_r \subseteq \mathcal{D} \), let \( \hat{\boldsymbol{\theta}}^\star \) and \( \hat{\boldsymbol{\theta}}_r^\star \) denote the model parameters obtained by solving \( \arg \, \min_{\boldsymbol{\theta} \in \mathbb{R}^{d_{\boldsymbol{\theta}}}} \hat{\mathcal{L}}_{\mathcal{D}}(\boldsymbol{\theta}) \) and \( \arg \, \min_{\boldsymbol{\theta} \in \mathbb{R}^{d_{\boldsymbol{\theta}}}} \hat{\mathcal{L}}_{\mathcal{D}_r}(\boldsymbol{\theta}) \), respectively. Then:
    \begin{equation*}
        \hat{\boldsymbol{\theta}}_r^\star - \hat{\boldsymbol{\theta}}^\star =  \nabla_{\boldsymbol{\theta}} f(\mathbf{X}, \boldsymbol{\theta}')^\top \Delta \boldsymbol{\alpha}_r^\star,
    \end{equation*}
    where \( \nabla_{\boldsymbol{\theta}} f(\mathbf{X}, \boldsymbol{\theta}') \in \mathbb{R}^{d_\text{out}|\mathcal{D}| \times d_{\boldsymbol{\theta}}} \) and \( \Delta \boldsymbol{\alpha}_r^\star \in \mathbb{R}^{d_\text{out}|\mathcal{D}|} \) are defined as
    \begin{equation*}
        \nabla_{\boldsymbol{\theta}} f(\mathbf{X}, \boldsymbol{\theta}') \coloneq
        \begin{bmatrix}
        \nabla_{\boldsymbol{\theta}} f(\mathbf{x}^{(1)}, \boldsymbol{\theta}') \\
        \vdots \\
        \nabla_{\boldsymbol{\theta}} f(\mathbf{x}^{(|\mathcal{D}|)}, \boldsymbol{\theta}')
        \end{bmatrix} \quad
        \Delta \boldsymbol{\alpha}_r^\star \coloneq
        \begin{bmatrix}
            \frac{1}{\lambda} \nabla_{f^\text{lin}(\mathbf{X}_f, \hat{\boldsymbol{\theta}}^\star)} \hat{\mathcal{L}}_{\mathcal{D}} \\
            -\frac{1}{\lambda} \left( \nabla_{f^\text{lin}(\mathbf{X}_r, \hat{\boldsymbol{\theta}}_r^\star)} \hat{\mathcal{L}}_{\mathcal{D}_r} - \nabla_{f^\text{lin}(\mathbf{X}_r, \hat{\boldsymbol{\theta}}^\star)} \hat{\mathcal{L}}_{\mathcal{D}} \right) 
        \end{bmatrix}.
    \end{equation*}
\end{proposition}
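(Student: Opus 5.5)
The plan is to write down the first-order optimality conditions of the two linearized problems and subtract them. Because \( f^\text{lin} \) is affine in \( \boldsymbol{\theta} \), its Jacobian is the constant matrix \( \nabla_{\boldsymbol{\theta}} f(\mathbf{X}, \boldsymbol{\theta}') \), whatever parameter it is evaluated at. So each stationarity condition expresses the displacement from \( \boldsymbol{\theta}' \) as this fixed Jacobian, transposed, applied to an output-space gradient.

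\textbf{Step 1: existence and uniqueness.} Since \( f^\text{lin}(\mathbf{x}, \cdot) \) is affine and \( \ell \) is convex in its first argument, each term \( \ell(f^\text{lin}(\mathbf{x}, \boldsymbol{\theta}), \mathbf{y}) \) is convex in \( \boldsymbol{\theta} \). The regularizer contributes \( \frac{\lambda}{2}\|\boldsymbol{\theta} - \boldsymbol{\theta}'\|_2^2 \) after averaging. Hence both \( \hat{\mathcal{L}}_{\mathcal{D}} \) and \( \hat{\mathcal{L}}_{\mathcal{D}_r} \) are \( \lambda \)-strongly convex, for any nonempty \( \mathcal{D}_r \). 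Their minimizers \( \hat{\boldsymbol{\theta}}^\star \) and \( \hat{\boldsymbol{\theta}}_r^\star \) therefore exist, are unique, and are characterized by vanishing gradients.

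\textbf{Step 2: the stationarity conditions.} Write \( \mathbf{J} \coloneq \nabla_{\boldsymbol{\theta}} f(\mathbf{X}, \boldsymbol{\theta}') \), and let \( \mathbf{J}_f \) and \( \mathbf{J}_r \) be its row blocks corresponding to \( \mathcal{D}_f \) and \( \mathcal{D}_r \). By the chain rule, with \( \nabla_{f^\text{lin}(\mathbf{X}, \boldsymbol{\theta})}\hat{\mathcal{L}}_{\mathcal{D}} \) the stacked vector of \( \frac{1}{|\mathcal{D}|}\nabla \ell \) terms,
\begin{equation*}
\nabla_{\boldsymbol{\theta}} \hat{\mathcal{L}}_{\mathcal{D}}(\boldsymbol{\theta}) = \mathbf{J}^\top \nabla_{f^\text{lin}(\mathbf{X}, \boldsymbol{\theta})}\hat{\mathcal{L}}_{\mathcal{D}} + \lambda(\boldsymbol{\theta} - \boldsymbol{\theta}').
\end{equation*}
Setting this to zero at \( \hat{\boldsymbol{\theta}}^\star \) gives
\begin{equation*}
\hat{\boldsymbol{\theta}}^\star - \boldsymbol{\theta}' = -\tfrac{1}{\lambda}\left(\mathbf{J}_f^\top \nabla_{f^\text{lin}(\mathbf{X}_f, \hat{\boldsymbol{\theta}}^\star)}\hat{\mathcal{L}}_{\mathcal{D}} + \mathbf{J}_r^\top \nabla_{f^\text{lin}(\mathbf{X}_r, \hat{\boldsymbol{\theta}}^\star)}\hat{\mathcal{L}}_{\mathcal{D}}\right).
\end{equation*}
The same computation for \( \mathcal{D}_r \), with \( 1/|\mathcal{D}_r| \) normalization absorbed into \( \nabla_{f^\text{lin}(\mathbf{X}_r, \cdot)}\hat{\mathcal{L}}_{\mathcal{D}_r} \), gives
\begin{equation*}
\hat{\boldsymbol{\theta}}_r^\star - \boldsymbol{\theta}' = -\tfrac{1}{\lambda}\mathbf{J}_r^\top \nabla_{f^\text{lin}(\mathbf{X}_r, \hat{\boldsymbol{\theta}}_r^\star)}\hat{\mathcal{L}}_{\mathcal{D}_r}.
\end{equation*}

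\textbf{Step 3: subtract.} Subtracting the first identity from the second cancels \( \boldsymbol{\theta}' \). Grouping the \( \mathbf{J}_f^\top \) and \( \mathbf{J}_r^\top \) terms and using \( \mathbf{J}^\top = [\mathbf{J}_f^\top \; \mathbf{J}_r^\top] \) (the forget points come first) yields exactly \( \mathbf{J}^\top \Delta\boldsymbol{\alpha}_r^\star \). The top block is \( \frac{1}{\lambda}\nabla_{f^\text{lin}(\mathbf{X}_f, \hat{\boldsymbol{\theta}}^\star)}\hat{\mathcal{L}}_{\mathcal{D}} \), and the bottom block is the stated difference scaled by \( -\frac{1}{\lambda} \).

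\textbf{Main obstacle.} There is no real analytic difficulty; the point to be careful about is bookkeeping. One must check that the averaged regularizer gives exactly \( \lambda(\boldsymbol{\theta} - \boldsymbol{\theta}') \) for both datasets, independent of \( |\mathcal{D}| \) versus \( |\mathcal{D}_r| \). One must also check that the output-space gradients carry their respective normalizations \( \frac{1}{|\mathcal{D}|} \) and \( \frac{1}{|\mathcal{D}_r|} \) consistently with the paper's notation. Finally, the Jacobian of \( f^\text{lin} \) must be \( \mathbf{J} \) at every parameter value, which holds because \( f^\text{lin} \) is affine.
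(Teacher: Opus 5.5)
Your proposal is correct and follows the paper's proof essentially step for step. Both write the stationarity conditions \( \nabla_{\boldsymbol{\theta}}\hat{\mathcal{L}}_{\mathcal{D}}(\hat{\boldsymbol{\theta}}^\star)=\mathbf{0} \) and \( \nabla_{\boldsymbol{\theta}}\hat{\mathcal{L}}_{\mathcal{D}_r}(\hat{\boldsymbol{\theta}}_r^\star)=\mathbf{0} \) using the constant Jacobian of \( f^\text{lin} \), solve each for the displacement from \( \boldsymbol{\theta}' \), and subtract; the paper's zero-padding of the retain-set coefficients on the forget block is the same as your row-block split of \( \mathbf{J} \).

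Your Step 1 is harmless but unnecessary. It brings in convexity of \( \ell \), which the proposition does not assume, whereas the first-order necessary condition already holds at any minimizer of the differentiable objective, so the identity holds whenever the two minimizers exist.
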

Proposition \ref{prop:3.1} implies that, for every nonempty \( \mathcal{D}_r \subseteq \mathcal{D} \), the solution to \( \arg \, \min_{\boldsymbol{\theta} \in \mathbb{R}^{d_{\boldsymbol{\theta}}}} \hat{\mathcal{L}}_{\mathcal{D}_r}(\boldsymbol{\theta}) \) always belongs to the convex subspace \( \mathcal{C} \coloneq \hat{\boldsymbol{\theta}}^\star + \operatorname{col} \left( \nabla_{\boldsymbol{\theta}} f(\mathbf{X}, \boldsymbol{\theta}')^\top \right) \) of \( \mathbb{R}^{d_{\boldsymbol{\theta}}} \), where \( \operatorname{col}(A) \) denotes the column space of \( A \). Therefore, the original unconstrained problem can be reformulated as a constrained one, and the following proposition shows that the two problems are equivalent in the sense that they yield the same unique minimizer.
\begin{proposition}
    \label{prop:3.2}
     Assume \( \ell \) is convex with respect to model output, then for any nonempty \( \mathcal{D}_r \subseteq \mathcal{D} \), \( \arg \, \min_{\boldsymbol{\theta} \in \mathcal{C}} \hat{\mathcal{L}}_{\mathcal{D}_r}(\boldsymbol{\theta}) \) and \( \arg \, \min_{\boldsymbol{\theta} \in \mathbb{R}^{d_{\boldsymbol{\theta}}}} \hat{\mathcal{L}}_{\mathcal{D}_r}(\boldsymbol{\theta}) \) have the same unique minimizer.
\end{proposition}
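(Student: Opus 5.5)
Our plan is to first establish that the unconstrained problem has a unique minimizer, then show that this minimizer lies in \( \mathcal{C} \), and finally argue that the constrained problem has a unique minimizer, which must therefore coincide with the unconstrained one.

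\textbf{Strict convexity of the unconstrained objective.} Since \( f^\text{lin}(\mathbf{x}, \cdot) \) is affine in \( \boldsymbol{\theta} \) and \( \ell \) is convex in the model output, each term \( \ell(f^\text{lin}(\mathbf{x}, \boldsymbol{\theta}), \mathbf{y}) \) is convex in \( \boldsymbol{\theta} \), as a convex function composed with an affine map. The regularizer \( \frac{\lambda}{2}\|\boldsymbol{\theta} - \boldsymbol{\theta}'\|_2^2 \) is \( \lambda \)-strongly convex, and \( \mathcal{D}_r \) is nonempty, so the averaged regularizer carries weight exactly \( \frac{\lambda}{2} \). Hence \( \hat{\mathcal{L}}_{\mathcal{D}_r} \) is \( \lambda \)-strongly convex on \( \mathbb{R}^{d_{\boldsymbol{\theta}}} \). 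Because \( \ell \geq 0 \), it is also coercive and continuous, so it has exactly one minimizer \( \hat{\boldsymbol{\theta}}_r^\star \). The same argument applied to \( \mathcal{D} \) gives a unique \( \hat{\boldsymbol{\theta}}^\star \). I will assume differentiability of \( \ell \), as the paper implicitly does by writing gradients with respect to outputs.

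\textbf{The unconstrained minimizer lies in \( \mathcal{C} \).} Proposition \ref{prop:3.1} gives \( \hat{\boldsymbol{\theta}}_r^\star = \hat{\boldsymbol{\theta}}^\star + \nabla_{\boldsymbol{\theta}} f(\mathbf{X}, \boldsymbol{\theta}')^\top \Delta\boldsymbol{\alpha}_r^\star \), and the second term is in \( \operatorname{col}(\nabla_{\boldsymbol{\theta}} f(\mathbf{X}, \boldsymbol{\theta}')^\top) \). Hence \( \hat{\boldsymbol{\theta}}_r^\star \in \mathcal{C} \). The only thing to check is that Proposition \ref{prop:3.1} is applicable, i.e., that the minimizers it refers to exist and are unique, which the previous step supplies.

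\textbf{Uniqueness of the constrained minimizer and conclusion.} The set \( \mathcal{C} \) is an affine subspace, so it is closed and convex. The restriction of a strongly convex, coercive, continuous function to a nonempty closed convex set attains its minimum at exactly one point. Since \( \hat{\boldsymbol{\theta}}_r^\star \) minimizes \( \hat{\mathcal{L}}_{\mathcal{D}_r} \) over all of \( \mathbb{R}^{d_{\boldsymbol{\theta}}} \) and belongs to \( \mathcal{C} \), it minimizes over \( \mathcal{C} \) as well. By uniqueness, it is the constrained minimizer, so the two problems share the same unique minimizer.

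The main obstacle is the membership step, and it is handled entirely by Proposition \ref{prop:3.1}. If I wanted to avoid relying on that proposition, I would argue directly from the first-order condition:
\[
\lambda(\hat{\boldsymbol{\theta}}_r^\star - \boldsymbol{\theta}') = -\nabla_{\boldsymbol{\theta}} f(\mathbf{X}_r, \boldsymbol{\theta}')^\top \nabla_{f^\text{lin}} \hat{\mathcal{L}}_{\mathcal{D}_r},
\]
so \( \hat{\boldsymbol{\theta}}_r^\star - \boldsymbol{\theta}' \) lies in the column space of the full Jacobian transpose. The same holds for \( \hat{\boldsymbol{\theta}}^\star - \boldsymbol{\theta}' \), and subtracting the two shows that \( \hat{\boldsymbol{\theta}}_r^\star - \hat{\boldsymbol{\theta}}^\star \) lies in that column space as well.
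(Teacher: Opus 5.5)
Your proof is correct and takes essentially the same route as the paper: the unconstrained minimizer lies in \(\mathcal{C}\) by Proposition~\ref{prop:3.1}, so it also minimizes over \(\mathcal{C}\), and strict convexity of \(\hat{\mathcal{L}}_{\mathcal{D}_r}\) forces it to equal the unique constrained minimizer. Your version is more careful than the paper's, which only cites \(\mathcal{C} \subset \mathbb{R}^{d_{\boldsymbol{\theta}}}\) and leaves both the membership \(\hat{\boldsymbol{\theta}}_r^\star \in \mathcal{C}\) and the existence of the minimizers implicit.
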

To solve the constrained problem, one can introduce a reparameterization that enforces the constraint, and reformulates the problem as an unconstrained one. We enforce the constraint \( \boldsymbol{\theta} \in \mathcal{C} \) via the reparameterization 
\begin{equation*}
    \boldsymbol{\theta} \coloneq \phi(\Delta \boldsymbol{\alpha}) \coloneq \hat{\boldsymbol{\theta}}^\star + \nabla_{\boldsymbol{\theta}} f\left(\mathbf{X}, \boldsymbol{\theta}'\right)^\top \Delta \boldsymbol{\alpha},
\end{equation*}
where \( \Delta \boldsymbol{\alpha} \in \mathbb{R}^{d_\text{out}|\mathcal{D}|} \). Under the reparameterization, \( f^\text{lin}(\cdot, \boldsymbol{\theta}) \) becomes \( g^\text{lin}(\cdot, \Delta \boldsymbol{\alpha}) \coloneq f^\text{lin}(\cdot, \hat{\boldsymbol{\theta}}^\star) + \mathbf{K}(\cdot, \mathbf{X} )\Delta \boldsymbol{\alpha} \) and \( \hat{\mathcal{L}}_{\mathcal{D}}(\boldsymbol{\theta}) \) becomes
\begin{equation*}
    \tilde{\mathcal{L}}_{\mathcal{D}}(\Delta\boldsymbol{\alpha}) \coloneq \frac{1}{|\mathcal{D}|} \sum_{(\mathbf{x}, \mathbf{y}) \in \mathcal{D}} \Big ( \ell(g^\text{lin}(\mathbf{x}, \Delta \boldsymbol{\alpha}), \mathbf{y}) + \frac{\lambda}{2} \| \phi(\Delta \boldsymbol{\alpha}) - \boldsymbol{\theta}' \|_2^2 \Big ),
\end{equation*}
where \( \mathbf{K}(\cdot, \cdot) \coloneq \nabla_{\boldsymbol{\theta}} f(\cdot, \boldsymbol{\theta}') \nabla_{\boldsymbol{\theta}} f(\cdot, \boldsymbol{\theta}')^\top \) denotes the empirical Neural Tangent Kernel (NTK) \cite{jacot2018neural}, and \( \mathbf{K}(\mathbf{X}_1, \mathbf{X}_2) \in \mathbb{R}^{d_\text{out}N_1 \times d_\text{out}N_2} \) for feature matrices \( \mathbf{X}_1 \in \mathbb{R}^{N_1 \times d_\text{in}} \) and \( \mathbf{X}_2 \in \mathbb{R}^{N_2 \times d_\text{in}} \). \( \tilde{\mathcal{L}}_{\mathcal{D}}(\Delta\boldsymbol{\alpha}) \) is a dual representation of \( \hat{\mathcal{L}}_{\mathcal{D}}(\boldsymbol{\theta}) \) and an unconstrained formulation of the constrained problem. The relationship between the solutions to the unconstrained problem in \( \Delta \boldsymbol{\alpha} \) and the constrained problem in \( \boldsymbol{\theta} \) is stated in the following proposition.
\begin{proposition}
    \label{prop:3.3}
    If \( \ell \) is convex with respect to model output, \( \lambda > 0 \), and \( \nabla_{\boldsymbol{\theta}}f\left(\mathbf{X}, \boldsymbol{\theta}'\right) \) has full row rank, then for any nonempty \( \mathcal{D}_r \subseteq \mathcal{D} \), \( \arg \min_{\boldsymbol{\Delta \alpha} \in \mathbb{R}^{d_\text{out} |\mathcal{D}|}} \tilde{\mathcal{L}}_{\mathcal{D}_r}(\Delta \boldsymbol{\alpha}) \) and \( \arg \, \min_{\boldsymbol{\theta} \in \mathcal{C}} \hat{\mathcal{L}}_{\mathcal{D}_r}(\boldsymbol{\theta}) \)
    have unique minimizers \( \boldsymbol{\Delta \alpha}_r^\star \) and \( \hat{\boldsymbol{\theta}}_{r,c}^\star \) that are related by \( \boldsymbol{\Delta \alpha}_r^\star \xrightleftharpoons[\phi^{-1}(\cdot)]{\phi(\cdot)} \hat{\boldsymbol{\theta}}_{r,c}^\star \).
\end{proposition}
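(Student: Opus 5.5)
The argument is that \(\phi\) is an affine bijection from \(\mathbb{R}^{d_\text{out}|\mathcal{D}|}\) onto \(\mathcal{C}\), and that \(\tilde{\mathcal{L}}_{\mathcal{D}_r} = \hat{\mathcal{L}}_{\mathcal{D}_r} \circ \phi\). Minimizers then transfer in both directions, and uniqueness follows from Proposition \ref{prop:3.2}.

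\textbf{Step 1: \(\phi\) is a bijection onto \(\mathcal{C}\).} Write \(\mathbf{J} \coloneq \nabla_{\boldsymbol{\theta}} f(\mathbf{X}, \boldsymbol{\theta}')\), so that \(\phi(\Delta\boldsymbol{\alpha}) = \hat{\boldsymbol{\theta}}^\star + \mathbf{J}^\top \Delta\boldsymbol{\alpha}\). By the definition of \(\mathcal{C}\), \(\phi\) is onto \(\mathcal{C}\). Since \(\mathbf{J}\) has full row rank, \(\mathbf{J}^\top\) has trivial null space, so \(\phi\) is injective. Its inverse on \(\mathcal{C}\) is explicit:
\[
\phi^{-1}(\boldsymbol{\theta}) = (\mathbf{J}\mathbf{J}^\top)^{-1}\mathbf{J}(\boldsymbol{\theta} - \hat{\boldsymbol{\theta}}^\star),
\]
where \(\mathbf{J}\mathbf{J}^\top = \mathbf{K}(\mathbf{X},\mathbf{X})\) is invertible. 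The full-rank hypothesis is needed exactly here.

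\textbf{Step 2: the two objectives agree under \(\phi\).} Substituting \(\boldsymbol{\theta} = \phi(\Delta\boldsymbol{\alpha})\) into \(f^\text{lin}\) gives
\[
f^\text{lin}(\mathbf{x},\phi(\Delta\boldsymbol{\alpha})) = f(\mathbf{x},\boldsymbol{\theta}') + \nabla_{\boldsymbol{\theta}} f(\mathbf{x},\boldsymbol{\theta}')(\hat{\boldsymbol{\theta}}^\star - \boldsymbol{\theta}') + \nabla_{\boldsymbol{\theta}} f(\mathbf{x},\boldsymbol{\theta}')\mathbf{J}^\top\Delta\boldsymbol{\alpha}.
\]
The first two terms are \(f^\text{lin}(\mathbf{x},\hat{\boldsymbol{\theta}}^\star)\), and the last is \(\mathbf{K}(\mathbf{x},\mathbf{X})\Delta\boldsymbol{\alpha}\), so the whole expression equals \(g^\text{lin}(\mathbf{x},\Delta\boldsymbol{\alpha})\). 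The regularizer \(\frac{\lambda}{2}\|\phi(\Delta\boldsymbol{\alpha}) - \boldsymbol{\theta}'\|_2^2\) appears identically in both objectives. Hence \(\tilde{\mathcal{L}}_{\mathcal{D}_r}(\Delta\boldsymbol{\alpha}) = \hat{\mathcal{L}}_{\mathcal{D}_r}(\phi(\Delta\boldsymbol{\alpha}))\) for every \(\Delta\boldsymbol{\alpha}\).

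\textbf{Step 3: existence and uniqueness on \(\mathcal{C}\).} By Proposition \ref{prop:3.2}, \(\arg\min_{\boldsymbol{\theta}\in\mathcal{C}} \hat{\mathcal{L}}_{\mathcal{D}_r}\) has a unique minimizer \(\hat{\boldsymbol{\theta}}_{r,c}^\star\). This also follows directly: \(\hat{\mathcal{L}}_{\mathcal{D}_r}\) is a convex loss composed with an affine map, plus a \(\lambda\)-strongly convex quadratic, so it is strongly convex and coercive on the closed affine set \(\mathcal{C}\).

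\textbf{Step 4: transfer through \(\phi\).} Set \(\Delta\boldsymbol{\alpha}_r^\star \coloneq \phi^{-1}(\hat{\boldsymbol{\theta}}_{r,c}^\star)\). For any \(\Delta\boldsymbol{\alpha}\), we have \(\phi(\Delta\boldsymbol{\alpha}) \in \mathcal{C}\), so by Step 2
\[
\tilde{\mathcal{L}}_{\mathcal{D}_r}(\Delta\boldsymbol{\alpha}) = \hat{\mathcal{L}}_{\mathcal{D}_r}(\phi(\Delta\boldsymbol{\alpha})) \ge \hat{\mathcal{L}}_{\mathcal{D}_r}(\hat{\boldsymbol{\theta}}_{r,c}^\star) = \tilde{\mathcal{L}}_{\mathcal{D}_r}(\Delta\boldsymbol{\alpha}_r^\star),
\]
so \(\Delta\boldsymbol{\alpha}_r^\star\) is a minimizer. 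For uniqueness, suppose \(\Delta\boldsymbol{\alpha}'\) is another minimizer. Then \(\phi(\Delta\boldsymbol{\alpha}')\) attains the minimum of \(\hat{\mathcal{L}}_{\mathcal{D}_r}\) over \(\mathcal{C}\), hence equals \(\hat{\boldsymbol{\theta}}_{r,c}^\star\), and injectivity of \(\phi\) gives \(\Delta\boldsymbol{\alpha}' = \Delta\boldsymbol{\alpha}_r^\star\).

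As an alternative route to uniqueness, \(\tilde{\mathcal{L}}_{\mathcal{D}_r}\) is itself strictly convex: its regularizer has Hessian \(\lambda \mathbf{J}\mathbf{J}^\top \succ 0\). This argument also relies on the full-rank assumption.

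\textbf{Main obstacle.} The only delicate point is the role of full row rank. Without it, \(\phi\) is onto \(\mathcal{C}\) but not injective, and the dual problem has an affine set of minimizers rather than a unique one. Once Step 1 is in place, the rest is routine.
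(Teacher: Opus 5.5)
Your proof is correct, but it links the two minimizers differently from the paper. The paper asserts that both objectives are strictly convex, so each minimizer is unique. It notes that full row rank makes \(\phi\) injective. It then obtains \(\hat{\boldsymbol{\theta}}_{r,c}^\star = \phi(\Delta\boldsymbol{\alpha}_r^\star)\) from Propositions \ref{prop:3.1} and \ref{prop:3.2}, i.e.\ from the explicit stationarity-based formula for \(\hat{\boldsymbol{\theta}}_r^\star - \hat{\boldsymbol{\theta}}^\star\).

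You never use Proposition \ref{prop:3.1}. Instead you prove the change-of-variables identity \(\tilde{\mathcal{L}}_{\mathcal{D}_r} = \hat{\mathcal{L}}_{\mathcal{D}_r}\circ\phi\) and that \(\phi\) is a bijection onto \(\mathcal{C}\), and you transfer minimizers directly. This buys rigor exactly where the paper is loose. In Proposition \ref{prop:3.1}, the symbol \(\Delta\boldsymbol{\alpha}_r^\star\) denotes a specific vector built from loss gradients; here it denotes the argmin of \(\tilde{\mathcal{L}}_{\mathcal{D}_r}\). The paper's proof does not spell out why these coincide, and your identity plus injectivity is what justifies it.

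You also place the full-rank hypothesis correctly in the strict convexity of \(\tilde{\mathcal{L}}_{\mathcal{D}_r}\), whose regularizer has Hessian \(\lambda\mathbf{K}\). The paper claims strict convexity of both objectives from convexity of \(\ell\) and \(\lambda>0\) alone. That claim fails for \(\tilde{\mathcal{L}}_{\mathcal{D}_r}\) when \(\mathbf{K}\) is singular, since the objective is then constant along \(\ker(\nabla_{\boldsymbol{\theta}} f(\mathbf{X},\boldsymbol{\theta}')^\top)\).

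The paper's route, in exchange, ties the dual minimizer to the closed form of Proposition \ref{prop:3.1}. The later computational shortcut relies on this, since it treats the block \(\Delta\boldsymbol{\alpha}_{r,f}^\star\) as known once \(\hat{\boldsymbol{\theta}}^\star\) is known. You recover this as a one-line corollary: \(\phi\) sends both vectors to \(\hat{\boldsymbol{\theta}}_r^\star = \hat{\boldsymbol{\theta}}_{r,c}^\star\) and is injective.
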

Proposition \ref{prop:3.2} and \ref{prop:3.3} together imply that, for linearized models and any nonempty \( \mathcal{D}_r \subseteq \mathcal{D} \), solving \( \arg \ \min_{\boldsymbol{\Delta \alpha} \in \mathbb{R}^{d_\text{out} |\mathcal{D}|}} \tilde{\mathcal{L}}_{\mathcal{D}_r}(\Delta \boldsymbol{\alpha}) \) is equivalent to solving \( \arg \ \min_{\boldsymbol{\theta} \in \mathbb{R}^{d_{\boldsymbol{\theta}}}} \hat{\mathcal{L}}_{\mathcal{D}_r}(\boldsymbol{\theta}) \), in the sense that both lead to the same parameter \( \hat{\boldsymbol{\theta}}_{r,c}^\star \coloneq \arg \, \min_{\boldsymbol{\theta} \in \mathcal{C}} \hat{\mathcal{L}}_{\mathcal{D}_r}(\boldsymbol{\theta}) \) and the same empirical risk \( \hat{\mathcal{L}}_{\mathcal{D}_r}(\hat{\boldsymbol{\theta}}_{r,c}^\star) \).

\subsection{\texorpdfstring{Influence on \( \Delta \boldsymbol{\alpha} \)}{Influence on Delta alpha}}
\label{subsec:Influence on Delta alpha}

The dual representation of \( \hat{\mathcal{L}}_{\mathcal{D}}(\boldsymbol{\theta}) \) provides an alternative way to obtain \( \hat{\boldsymbol{\theta}}^\star \). Specifically, \( \hat{\boldsymbol{\theta}}^\star \) can be obtained by first obtaining \( \Delta \boldsymbol{\alpha}^\star = \arg \ \min_{\boldsymbol{\Delta \alpha} \in \mathbb{R}^{d_\text{out} |\mathcal{D}|}} \tilde{\mathcal{L}}_{\mathcal{D}}(\Delta \boldsymbol{\alpha}) \), then mapping \( \Delta \boldsymbol{\alpha}^\star \) back to \( \hat{\boldsymbol{\theta}}^\star \) via \( \phi(\cdot) \). Therefore, to study the influence of removing a subset of data points from \( \mathcal{D} \) on \( \hat{\boldsymbol{\theta}}^\star \), it suffices to study the influence on \( \Delta \boldsymbol{\alpha}^\star \). For any nonempty \( \mathcal{D}_r \subseteq \mathcal{D} \), define \( \Delta \boldsymbol{\alpha}_r^\star \coloneq \arg \ \min_{\boldsymbol{\Delta \alpha} \in \mathbb{R}^{d_\text{out} |\mathcal{D}|}} \tilde{\mathcal{L}}_{\mathcal{D}_r}(\Delta \boldsymbol{\alpha}) \), by following the same derivation as in Section \ref{subsec:Influence Functions} and using the fact that \( \Delta \boldsymbol{\alpha}^\star = \mathbf{0} \), \( \Delta \boldsymbol{\alpha}_r^\star \) can be approximated by
\begin{equation}
    \Delta \boldsymbol{\alpha}_r^\star \approx \frac{|\mathcal{D}_f|}{|\mathcal{D}|} \mathbf{H}_{\Delta \boldsymbol{\alpha}^\star}^{-1} \nabla_{\Delta \boldsymbol{\alpha}} \tilde{\mathcal{L}}_{\mathcal{D}_f}(\mathbf{0}),
    \label{eq:influence on delta alpha}
\end{equation}
where \( \mathbf{H}_{\Delta \boldsymbol{\alpha}^\star} \coloneqq \nabla_{\Delta \boldsymbol{\alpha}}^2 \tilde{\mathcal{L}}_\mathcal{D}(\mathbf{0}, -\frac{1}{|\mathcal{D}|}) \in \mathbb{R}^{d_\text{out}|\mathcal{D}| \times d_\text{out}|\mathcal{D}|}\) and \( \mathbf{H}_{\Delta \boldsymbol{\alpha}^\star}^{-1} \) may be further approximated by \( \nabla_{\Delta \boldsymbol{\alpha}}^2 \tilde{\mathcal{L}}_\mathcal{D}(\mathbf{0})^{-1} \) when \( \frac{|\mathcal{D}_f|}{|\mathcal{D}|} \) is small. Furthermore, Equation \ref{eq:influence on delta alpha} can be expressed entirely in terms of the NTK. Let \( \mathbf{K} \), \( \mathbf{K}_r \) and \( \mathbf{K}_f \) denote \( \mathbf{K}(\mathbf{X}, \mathbf{X}) \), \( \mathbf{K}(\mathbf{X}_r, \mathbf{X}) \) and \( \mathbf{K}(\mathbf{X}_f, \mathbf{X}) \), respectively, \( \mathbf{H}_{\Delta \boldsymbol{\alpha}^\star} \) and  \( \nabla_{\Delta \boldsymbol{\alpha}} \tilde{\mathcal{L}}_{\mathcal{D}_f}(\mathbf{0}) \) can be expressed as
\begin{align}
    &\mathbf{H}_{\Delta \boldsymbol{\alpha}^\star} = \frac{|\mathcal{D}_r|}{|\mathcal{D}|} \left( \mathbf{K}_r^\top \nabla_{f^\text{lin}(\mathbf{X}_r, \hat{\boldsymbol{\theta}}^\star)}^2 \hat{\mathcal{L}}_{\mathcal{D}_r} \ \mathbf{K}_r + \lambda \mathbf{K} \right) \label{eq:Hessian with repsect to delta alpha star} \\
    &\nabla_{\Delta \boldsymbol{\alpha}} \tilde{\mathcal{L}}_{\mathcal{D}_f}(\mathbf{0}) = \mathbf{K}_f^\top \nabla_{f^\text{lin}(\mathbf{X}_f, \hat{\boldsymbol{\theta}}^\star)} \hat{\mathcal{L}}_{\mathcal{D}_f} + \lambda \mathbf{K} \boldsymbol{\alpha}^\star, \label{eq:grad of reparameterized loss at 0}
\end{align}
where \( \boldsymbol{\alpha}^\star \coloneq -\frac{1}{\lambda}\nabla_{f^\text{lin}(\mathbf{X}, \hat{\boldsymbol{\theta}}^\star)} \hat{\mathcal{L}}_{\mathcal{D}} \in \mathbb{R}^{d_\text{out}|\mathcal{D}|} \), and \( \nabla_{f^\text{lin}(\mathbf{X}_r, \hat{\boldsymbol{\theta}}^\star)}^2 \hat{\mathcal{L}}_{\mathcal{D}_r} \in \mathbb{R}^{d_\text{out}|\mathcal{D}_r| \times d_\text{out}|\mathcal{D}_r|} \) is the Hessian of \( \hat{\mathcal{L}}_{\mathcal{D}_r} \) with respect to model outputs, evaluated at \( f^\text{lin}(\mathbf{X}_r, \hat{\boldsymbol{\theta}}^\star) \coloneq \operatorname{vec}(\{ f^\text{lin}(\mathbf{x}, \hat{\boldsymbol{\theta}}^\star) \}_{\mathbf{x} \in \mathcal{D}_r}) \in \mathbb{R}^{d_\text{out}|\mathcal{D}_r|}\). After obtaining \(  \Delta \boldsymbol{\alpha}_r^\star \), the change in model output and loss at a test data point \( \mathbf{z}_t \) can be approximated by the first-order Taylor expansion of \( g^\text{lin}(\mathbf{x}_t, \Delta \boldsymbol{\alpha}_r^\star) \) and \( \tilde{\mathcal{L}}_{\{ \mathbf{z}_t \}}(\Delta\boldsymbol{\alpha}_r^\star) \) around \( \mathbf{0} \),
\begin{align}
    &f^\text{lin}(\mathbf{x}_t, \hat{\boldsymbol{\theta}}_r^\star) - f^\text{lin}(\mathbf{x}_t, \hat{\boldsymbol{\theta}}^\star) \approx \mathbf{K}(\mathbf{x}_t, \mathbf{X})\Delta \boldsymbol{\alpha}_r^\star \label{eq:change in output delta alpha} \\
    &\hat{\mathcal{L}}_{\{ \mathbf{z}_t \}}(\hat{\boldsymbol{\theta}}_r^\star) - \hat{\mathcal{L}}_{\{ \mathbf{z}_t \}}(\hat{\boldsymbol{\theta}}^\star) \approx \nabla_{\Delta \boldsymbol{\alpha}} \tilde{\mathcal{L}}_{\{ \mathbf{z}_t \}}(\mathbf{0})^\top \Delta \boldsymbol{\alpha}_r^\star. \label{eq:change in loss delta alpha}
\end{align}


\subsection{\texorpdfstring{Advantages of working in \( \Delta \boldsymbol{\alpha} \)-space}{Advantages of working in Delta alpha-space}}
\label{subsec:Advantages of working in Delta alpha-space}

Equation \ref{eq:influence on delta alpha}, \ref{eq:change in output delta alpha}, and \ref{eq:change in loss delta alpha} are the \( \Delta \boldsymbol{\alpha} \)-space counterparts of Equation \ref{eq:influence on theta}, \ref{eq:change in output theta} and \ref{eq:change in loss theta}. Although working in the \( \Delta \boldsymbol{\alpha} \)-space may appear less convenient due to the additional step of mapping back to the \( \boldsymbol{\theta} \)-space, it nonetheless offers two advantages over working directly in \( \boldsymbol{\theta} \)-space. 

It can be seen from Proposition \ref{prop:3.1} that, entries of \( \Delta \boldsymbol{\alpha}_r^\star \) corresponding to data points in \( \mathcal{D}_f \), i.e., \( \Delta \boldsymbol{\alpha}_{r,f}^\star \coloneq \frac{1}{\lambda} \nabla_{f^\text{lin}(\mathbf{X}_f, \hat{\boldsymbol{\theta}}^\star)} \hat{\mathcal{L}}_{\mathcal{D}} \), are known once \( \hat{\boldsymbol{\theta}}^\star \) is known. This gives rise to a computational shortcut for evaluating Equation \ref{eq:influence on delta alpha}. Let \( \mathbf{K}_{ij}\) denotes \( \mathbf{K}(\mathbf{X}_i, \mathbf{X}_j) \), Equation \ref{eq:influence on delta alpha} can be expressed in block matrix form as
\begin{equation*}
    \begin{bmatrix}
        \mathbf{H}_{\Delta \boldsymbol{\alpha}^\star}^{ff} & \mathbf{H}_{\Delta \boldsymbol{\alpha}^\star}^{fr} \\
        \mathbf{H}_{\Delta \boldsymbol{\alpha}^\star}^{rf} & \mathbf{H}_{\Delta \boldsymbol{\alpha}^\star}^{rr} \\
    \end{bmatrix}
    \begin{bmatrix}
        \Delta \boldsymbol{\alpha}_{r,f}^\star  \\
        \Delta \boldsymbol{\alpha}_{r,r}^\star 
    \end{bmatrix}
    = \frac{|\mathcal{D}_f|}{|\mathcal{D}|}
    \begin{bmatrix}
        \nabla_{\Delta \boldsymbol{\alpha}} \tilde{\mathcal{L}}_{\mathcal{D}_f}(\mathbf{0})_f \\
        \nabla_{\Delta \boldsymbol{\alpha}} \tilde{\mathcal{L}}_{\mathcal{D}_f}(\mathbf{0})_r
    \end{bmatrix},
\end{equation*}
where
\begin{align}
    &\mathbf{H}_{\Delta \boldsymbol{\alpha}^\star}^{ij} = \frac{|\mathcal{D}_r|}{|\mathcal{D}|} \left( \mathbf{K}_{ir} \nabla_{f^\text{lin}(\mathbf{X}_r, \hat{\boldsymbol{\theta}}^\star)}^2 \hat{\mathcal{L}}_{\mathcal{D}_r} \ \mathbf{K}_{rj} + \lambda \mathbf{K}_{ij} \right) \label{eq:H^ij} \\
    &\nabla_{\Delta \boldsymbol{\alpha}} \tilde{\mathcal{L}}_{\mathcal{D}_f}(\mathbf{0})_i = \mathbf{K}_{if} \nabla_{f^\text{lin}(\mathbf{X}_f, \hat{\boldsymbol{\theta}}^\star)} \hat{\mathcal{L}}_{\mathcal{D}_f} + \lambda \mathbf{K}_i \boldsymbol{\alpha}^\star. \label{eq:grad L_Df(0)_i}
\end{align}
Since \( \Delta \boldsymbol{\alpha}_{r,f}^\star \) is known and \( \mathbf{H}_{\Delta \boldsymbol{\alpha}^\star}^{rr} \in \mathbb{R}^{d_\text{out}|\mathcal{D}_r| \times d_\text{out}|\mathcal{D}_r|} \) is positive definite when \( \mathbf{H}_{\Delta \boldsymbol{\alpha}^\star} \) is positive definite, to find \( \Delta \boldsymbol{\alpha}_r^\star \), it suffices to find \( \Delta \boldsymbol{\alpha}_{r,r}^\star \) by solving
\begin{equation}
    \label{eq:reduced linear system} 
    \mathbf{H}_{\Delta \boldsymbol{\alpha}^\star}^{rr} \Delta \boldsymbol{\alpha}_{r,r}^\star = \frac{|\mathcal{D}_f|}{|\mathcal{D}|} \nabla_{\Delta \boldsymbol{\alpha}} \tilde{\mathcal{L}}_{\mathcal{D}_f}(\mathbf{0})_r - \mathbf{H}_{\Delta \boldsymbol{\alpha}^\star}^{rf} \Delta \boldsymbol{\alpha}_{r,f}^\star.
\end{equation}
Linear system \ref{eq:reduced linear system} has two key properties: (i) the dimension of \( \mathbf{H}_{\Delta \boldsymbol{\alpha}^\star}^{rr} \) grows with \( d_\text{out}|\mathcal{D}_r| \) instead of \( d_{\boldsymbol{\theta}} \), and (ii) the size of the system decreases as \( |\mathcal{D}_f| \) increases. Therefore, depending on the values of \( d_\text{out} \) and \( |\mathcal{D}_f| \), the system can be solved either by forming \( \mathbf{H}_{\Delta \boldsymbol{\alpha}^\star}^{rr} \) explicitly and using a standard linear system solver, or by applying CG with an operator that evaluates \( \mathbf{H}_{\Delta \boldsymbol{\alpha}^\star}^{rr} \mathbf{v} \). In either case, the computational cost of solving the system is independent of model size, and a larger \( |\mathcal{D}_f| \) yields a smaller system, reducing the cost even further. 


Equations \ref{eq:change in output delta alpha} and \ref{eq:change in loss delta alpha} can be efficiently evaluated at multiple test data points via vectorization. Specifically, let \( \mathcal{D}_t \) denote a test dataset, the change in model output at all data points in \( \mathcal{D}_t \) can be computed by \( \mathbf{K}_t \Delta \boldsymbol{\alpha}_r^\star \), and the change in loss at all data points in \( \mathcal{D}_t \) can be computed by
\begin{equation}
    \label{eq:change in loss vectorization}
    |\mathcal{D}_t| \nabla_{f^\text{lin}(\mathbf{X}_t, \hat{\boldsymbol{\theta}}^\star)} \hat{\mathcal{L}}_{\mathcal{D}_t} \odot \left( \mathbf{K}_t \ \Delta \boldsymbol{\alpha}_r^\star \right) + \lambda \boldsymbol{\alpha}^{\star\top} \mathbf{K} \ \Delta \boldsymbol{\alpha}_r^\star \mathbf{1},
\end{equation}
where \( \mathbf{K}_t \) denotes \( \mathbf{K}(\mathbf{X}_t, \mathbf{X}) \) and \( \mathbf{1} \in \mathbb{R}^{d_\text{out}|\mathcal{D}_t|} \) denotes the all-ones vector \footnote{Equation \ref{eq:change in loss vectorization} can be used directly for scalar-valued model, i.e., \( d_\text{out} = 1 \). For vector-valued model, one needs to sum every \( d_\text{out} \) consecutive rows of the first term and change the dimension of \( \mathbf{1} \) in the second term to \( \mathbb{R}^{|\mathcal{D}_t|} \).}. Since these two expressions depend on the NTK matrix instead of the Jacobians, being able to evaluate the NTK value between any two data points is sufficient to compute them. Therefore, we can avoid explicitly introducing the NTK features \( \nabla_{\boldsymbol{\theta}} f(\mathbf{x}^{(i)}, \boldsymbol{\theta}') \ \forall i \) and computing the dot products \( \nabla_{\boldsymbol{\theta}} f(\mathbf{x}^{(i)}, \boldsymbol{\theta}')^\top \nabla_{\boldsymbol{\theta}} f(\mathbf{x}^{(j)}, \boldsymbol{\theta}') \ \forall i,j\), and instead evaluate them implicitly through a function. This allows us to compute the influence of removing data points on the loss and model outputs of infinitely wide neural networks whose NTKs admit analytical forms \cite{jacot2018neural,lee2019wide}.


\subsection{Discussion}
\label{subsec:Discussion}

The estimates provide by influence functions are most reliable when the empirical risk is strictly convex with respect to model parameters, and model is trained to convergence, i.e.,  \( \|\nabla_{\boldsymbol{\theta}}\mathcal{L}_{\mathcal{D}}(\boldsymbol{\theta}^\star)\|_2 = 0 \). In addition to the conditions stated in Proposition \ref{prop:3.3}, the reliable use of the dual representation requires model to be linearizable. Informally, we say a parameterized model is linearizable at some reference point \( \boldsymbol{\theta}' \) if the model and its linearization around \( \boldsymbol{\theta}' \) produce similar training dynamics, i.e., parameter updates, predictions, and losses, throughout training. Concretely, this means training \( f(\cdot, \boldsymbol{\theta}) \) and \( f^\text{lin}(\cdot, \boldsymbol{\theta}) \) yields closely aligned parameter trajectories and model output trajectories throughout optimization when the same optimizer is used for both models\footnote{Assuming all sources of randomness in optimization process, e.g., initialization and mini-batch sampling, are identical.}.

Apart from linear and linearized models, nonlinear models can be made approximately linearizable by training with \( \hat{\mathcal{L}}_\mathcal{D} \) using large values of \( \lambda \). This is because a nonlinear model is well approximated by its linearization around \( \boldsymbol{\theta}' \) when its parameters change little from \( \boldsymbol{\theta}' \) during training, and the regularization term \( \lambda \| \boldsymbol{\theta} - \boldsymbol{\theta}' \|_2^2 \) discourages the parameters from deviating too far from \( \boldsymbol{\theta}' \)  when \( \lambda \) is large. Therefore, when trained with large \( \lambda \), the above results may still hold even if the linearized model \( f^\text{lin}(\cdot, \boldsymbol{\theta}) \) in \( \hat{\mathcal{L}}_\mathcal{D} \) is replaced by its nonlinear counterpart \( f(\cdot, \boldsymbol{\theta}) \). Moreover, the results in Sections \ref{subsec:Dual Representation} and \ref{subsec:Influence on Delta alpha} can be extended to settings in which \( \hat{\mathcal{L}}_\mathcal{D} \) is regularized by \( \| \boldsymbol{\theta} \|_2^2 \). However, in this case, the \( \boldsymbol{\theta}' \) in \( f^\text{lin}(\cdot, \boldsymbol{\theta}) \) should be set to \( \mathbf{0} \) to ensure consistency with the regularization term. When \( f(\cdot, \boldsymbol{\theta}) \) is linear, this gives \( f^\text{lin}(\cdot, \boldsymbol{\theta}) = f(\cdot, \boldsymbol{\theta}) \).

\section{Experiments} \label{sec:Experiments}

This section presents empirical verification of the advantages claimed in Section \ref{subsec:Advantages of working in Delta alpha-space} for working in \( \Delta \boldsymbol{\alpha} \)-space and investigates how the choice of \( \lambda \) affects the training dynamics of nonlinear models and the effectiveness of the two representations of influence functions. All experiments are implemented in Python using JAX \cite{jax2018github} and Neural Tangents \cite{neuraltangents2020, novak2022fast, hron2020infinite, sohl2020infinite, han2022fast} libraries, and all results are obtained on two NVIDIA RTX 3080 Ti GPUs. The code to reproduce the experiments in this section can be found in \url{https://github.com/ZhenhuanSun/ntk_influence}.


\subsection{Implementation Details}


We assume \( \mathbf{K} \) is precomputed and stored, and solve linear system \ref{eq:reduced linear system} using CG with an operator that evaluates \( \mathbf{H}_{\Delta \boldsymbol{\alpha}^\star}^{rr} \mathbf{v}  \). The operator \( \mathbf{v} \mapsto \mathbf{H}_{\Delta \boldsymbol{\alpha}^\star}^{rr} \mathbf{v} \) is implemented using Equation \ref{eq:H^ij} as
\begin{equation*}
    \mathbf{v} \mapsto \frac{|\mathcal{D}_r|}{|\mathcal{D}|} \left( \mathbf{K}_{rr} \left(\nabla_{f^\text{lin}(\mathbf{X}_r, \hat{\boldsymbol{\theta}}^\star)}^2 \hat{\mathcal{L}}_{\mathcal{D}_r} \left( \mathbf{K}_{rr} \mathbf{v} \right) \right) + \lambda \mathbf{K}_{rr} \mathbf{v} \right),
\end{equation*}
where \( \mathbf{K}_{rr} \) is obtained by slicing \( \mathbf{K} \) and our implementation relies on a combination of HVPs and matrix-vector products. In practice, \( \mathbf{K}_{rr} \) may be stored either on a single GPU or sharded across multiple GPUs to reduce per-device memory usage. In the latter case, we implement a sharded version of the map \( \mathbf{v} \mapsto \mathbf{K}_{rr} \mathbf{v} \) that allows \( \mathbf{K}_{rr} \mathbf{v} \) to be evaluated while keeping the shards of \( \mathbf{K}_{rr} \) distributed across multiple devices.


\subsection{Linearized Models}

We verify the computational advantage of the dual representation of influence functions, hereafter referred to as the \( \Delta \boldsymbol{\alpha} \)-space method, by comparing its unlearning performance with the the original influence functions, hereafter referred to as the \( \boldsymbol{\theta} \)-space method, on two linearized neural networks and two datasets. For each model-dataset pair, we train the model on full dataset, remove \( p \in \{ 10, 30, 50, 70, 90 \} \) percent of the data points randomly from either all classes or a specific class of the dataset, and apply both methods for unlearning. The unlearning performance of both methods is evaluated using the following metrics:
\begin{enumerate}[leftmargin=*]
    \item \textbf{Cold-start runtime}, defined as the time required to run each method on its first execution. This includes not only the time to prepare and solve the linear systems, but also one-time overhead such as compilation, memory allocation, and other initialization costs. This metric compares unlearning speed in scenarios where unlearning is required only once. To accurately measure this runtime, for each value of \( p \), the experiment is run in a separate process to prevent reuse of compiled functions and cached resources;
    \item \textbf{Warm-start runtime}, defined as the mean and standard deviation of the runtime over five subsequent executions after initialization overhead has been incurred. This metric compares unlearning speed in scenarios where unlearning needs to be performed repeatedly on different \( \mathcal{D}_f \) of the same size;
    \item \textbf{Relative $\ell_2$ distance} between the unlearned model parameters \( \boldsymbol{\theta}_u \) and the retrained model parameters \( \hat{\boldsymbol{\theta}}_r^\star \), which measures how closely the unlearned model aligns with the retrained model in parameter space. A smaller value indicates better alignment between them;
    \item \textbf{Accuracy} of the unlearned models and the retrained model on \( \mathcal{D}_f \), which is used to assess forgetting quality. The closer the accuracy of the unlearned model is to that of the retrained model, the stronger the indication that the unlearned model has forgotten the designated data points to a degree similar to the retrained model.
\end{enumerate}

We consider the linearizations of two models: a fully connected neural network (FCNN) with three hidden layers of width \( 1024 \) and ReLU activations, and a convolutional neural network (CNN) composed of three blocks, each consisting of a convolutional layer with \( 128 \) channels and a \( 3 \times 3 \) kernel, followed by a ReLU activation and average pooling. The FCNN and CNN are both linearized around their initialization and trained, respectively, on a subset of MNIST \cite{lecun2002gradient} consisting of \( 200 \) data points from each of the 10 classes and on a subset of CIFAR-10 \cite{krizhevsky2009learning} consisting of \( 1000 \) data points from each of classes 0 (airplane) and 1 (automobile). For FCNN-MNIST pair, training is performed by minimizing \( \hat{\mathcal{L}}_\mathcal{D} \) with cross entropy loss and \( \lambda = 10^{-2} \) using full-batch gradient descent, and data points are randomly removed from all classes of the constructed dataset. For the CNN-CIFAR-10 pair, training is performed by minimizing \( \hat{\mathcal{L}}_\mathcal{D} \) with squared error loss and \( \lambda = 10^{-1} \) using momentum, and data points are removed from class \( 0 \) of the constructed dataset. To highlight the effect of unlearning, we include a random perturbation baseline that perturbs the trained model parameter \( \hat{\boldsymbol{\theta}}^\star \) with random noise \( \boldsymbol{\epsilon} \in \mathbb{R}^{d_{\boldsymbol{\theta}}} \), where \( \boldsymbol{\epsilon} \) satisfies \( \| \boldsymbol{\epsilon} \|_2 = \| \hat{\boldsymbol{\theta}}^\star - \hat{\boldsymbol{\theta}}_r^\star \|_2 \). The unlearning performance of the \( \boldsymbol{\theta} \)-space and \( \Delta \boldsymbol{\alpha} \)-space methods on these models is shown in Figure \ref{fig:figure_1}.

\begin{figure}[t]
  \centering
  \vspace{-1.5\baselineskip}
  \includegraphics[width=\textwidth]{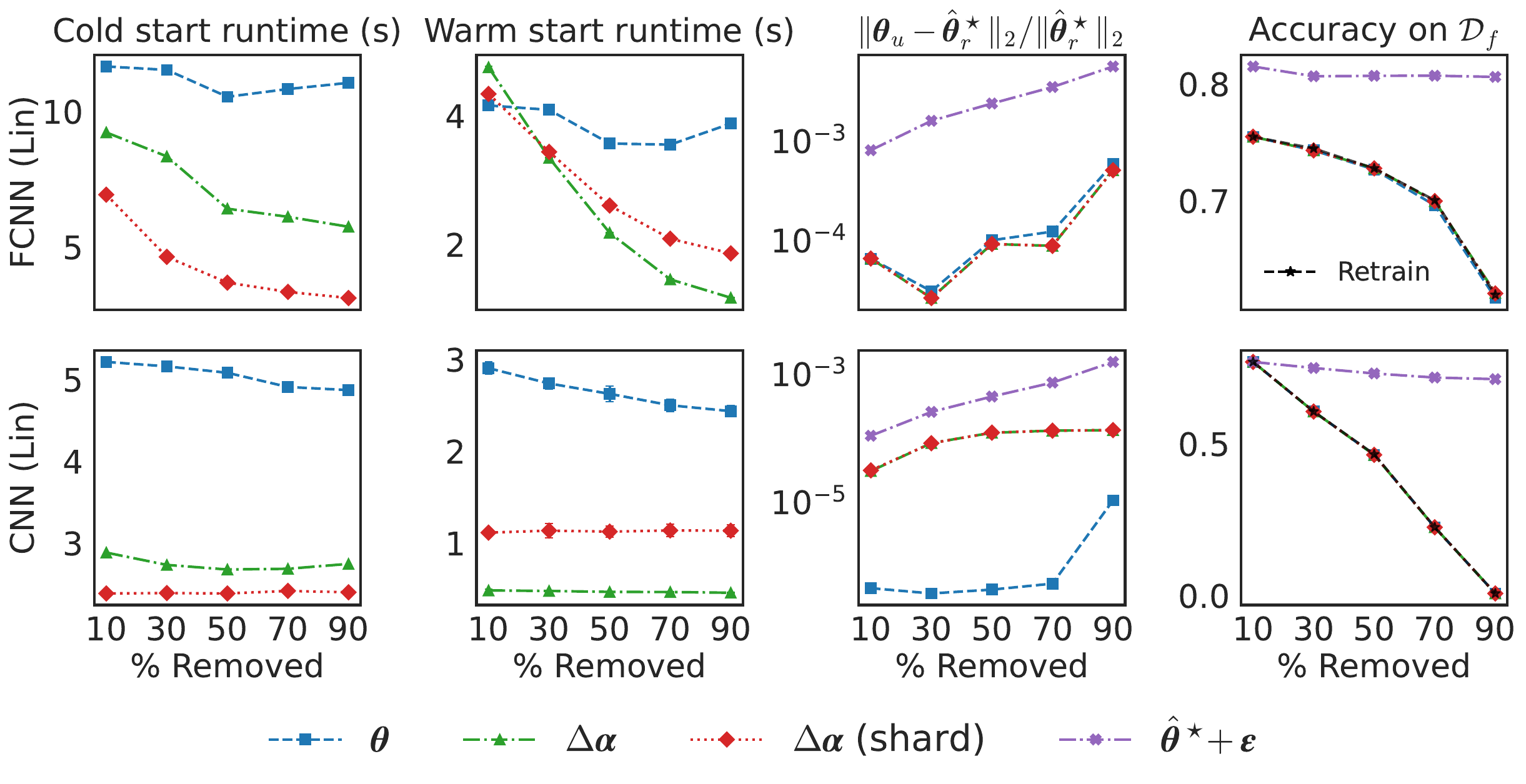}
  \vspace{-2\baselineskip}
  \caption{\textbf{Comparison of the unlearning performance of \( \boldsymbol{\theta} \)-space and \( \Delta \boldsymbol{\alpha} \)-space methods on linearized FCNN and CNN trained on subsets of MNIST and CIFAR10.} The top and bottom rows correspond to the linearized FCNN and CNN, respectively. We compare the \( \boldsymbol{\theta} \)-space method with two implementations of the \( \Delta \boldsymbol{\alpha} \)-space method, and with a random perturbation baseline. From left to right, we report cold-start runtime, warm-start runtime, relative parameter distance, and test accuracy as a function of the percentage of removed data. The dashed black curves in the rightmost panes indicate the accuracy of the retrained model. We observe that, when model size is large relative to dataset size, the \( \Delta \boldsymbol{\alpha} \)-space method requires less runtime while maintaining  unlearning performance comparable to that of the \( \boldsymbol{\theta} \)-space method.}
  \vspace{-1.5\baselineskip}
  \label{fig:figure_1}
\end{figure}


For both experiments, two implementations of the \( \Delta \boldsymbol{\alpha} \)-space method achieve lower runtime than the \( \boldsymbol{\theta} \)-space method across all percentage of removal. In particular, the sharded implementation has shorter runtime in the cold start scenario, whereas the non-sharded implementation has shorter runtime in the warm start scenario. These results are consistent with our expectations, since the dataset size in both experiments are small relative to the model size, i.e., \( d_\text{out}|\mathcal{D}| \ll d_{\boldsymbol{\theta}} \). Consequently, the cost of solving linear system \ref{eq:reduced linear system} is smaller than that of solving linear system \ref{eq:linear system in theta}. In addition, the unlearning performance of the \( \Delta \boldsymbol{\alpha} \)-space method is comparable to that of the \( \boldsymbol{\theta} \)-space method, which can be seen by the small relative \( \ell_2 \) distance between \( \hat{\boldsymbol{\theta}}_r^\star \) and \( \boldsymbol{\theta}_u \) produced by both methods and by their close agreement with the retrained model in accuracy on \( \mathcal{D}_f \).


\subsection{\texorpdfstring{Effect of \( \lambda \)}{Effect of lambda}}

To investigate how \( \lambda \) affects the training dynamics of nonlinear models and how these dynamics compare with those of their linearizations, we simultaneously train a nonlinear model and its linearization, using the same loss and optimizer, and record their training dynamics for different values of \( \lambda \).

\begin{wrapfigure}{r}{0.5\textwidth}
  \centering
  \vspace{-\baselineskip}
  \includegraphics[width=0.5\textwidth]{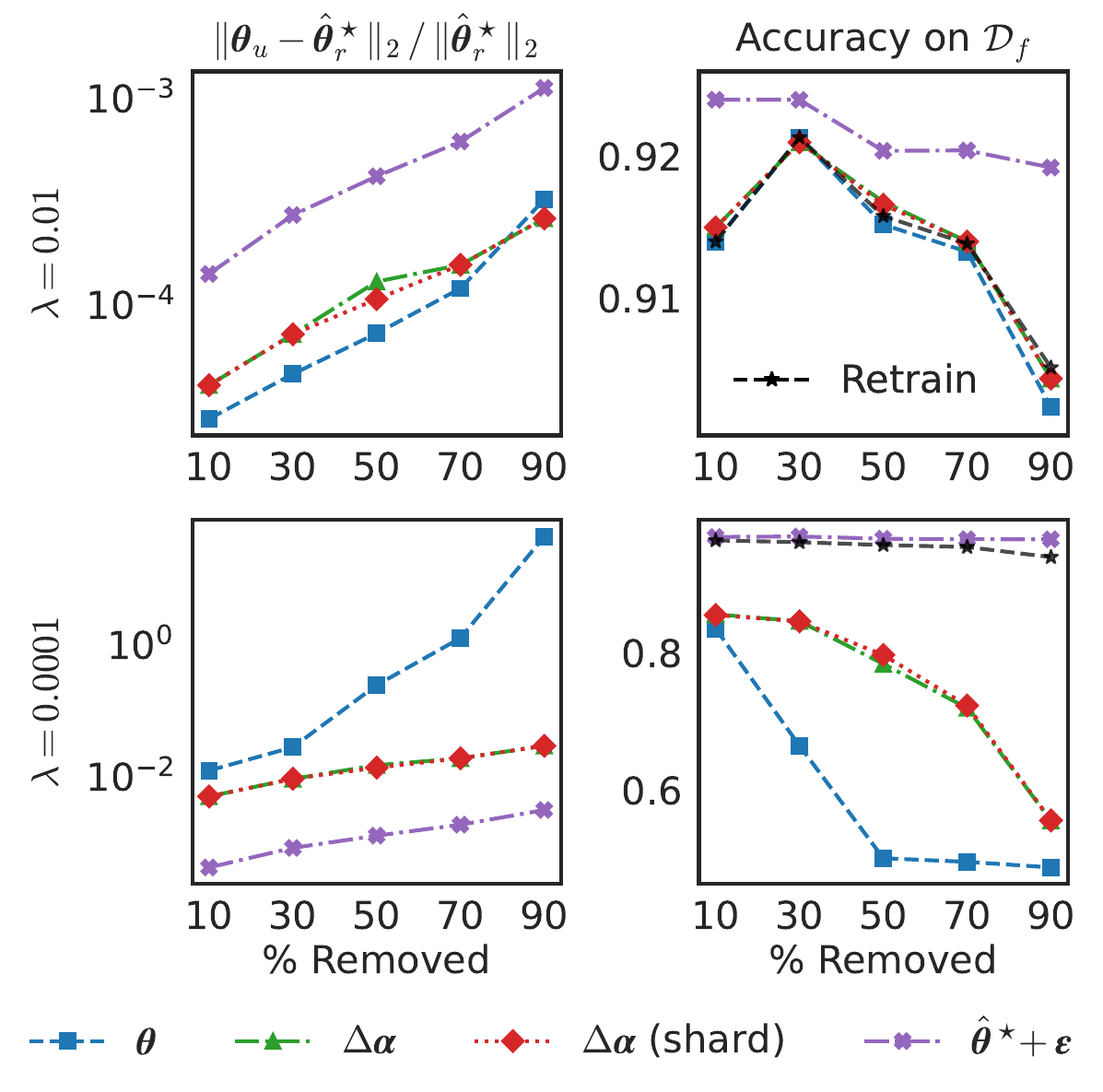}
  \vspace{-1.8\baselineskip}
  \caption{\textbf{Comparison of the unlearning performance of \( \boldsymbol{\theta} \)-space and \( \Delta \boldsymbol{\alpha} \)-space methods on a nonlinear model under different values of \( \lambda \). The dashed black curves indicate the accuracy of the retrained model.} }
  \label{fig:figure_3}
  \vspace{-1.5\baselineskip} 
\end{wrapfigure}

We adopt a scalar-valued FCNN with two hidden layers of width \( 2048 \) and ReLU activations. Both the model and its linearization around initialization are trained by minimizing \( \hat{\mathcal{L}}_{\mathcal{D}}(\boldsymbol{\theta}) \) with squared error loss on a binary MNIST dataset consisting of \( 5000 \) data points from each of the classes \( 3 \) and \( 8 \), using full-batch gradient descent with a learning rate of \( 0.1 \). During training, we record the relative \( \ell_2 \) distance between the parameters of the FCNN and its linearization, denoted by \( \boldsymbol{\theta} \) and \( \boldsymbol{\theta}^\text{lin} \), respectively; the root mean squared error (RMSE) between their predictions on \( \mathcal{D}_t \); the accuracy performance of both models on \( \mathcal{D}_t \); and the gradient norms of the parameters of the FCNN and its linearization. These quantities, recorded over \( 4000 \) epochs of training, are summarized in Figure \ref{fig:figure_2}.

\begin{figure}[t]
  \centering
  \vspace{-\baselineskip}
  \includegraphics[width=\textwidth]{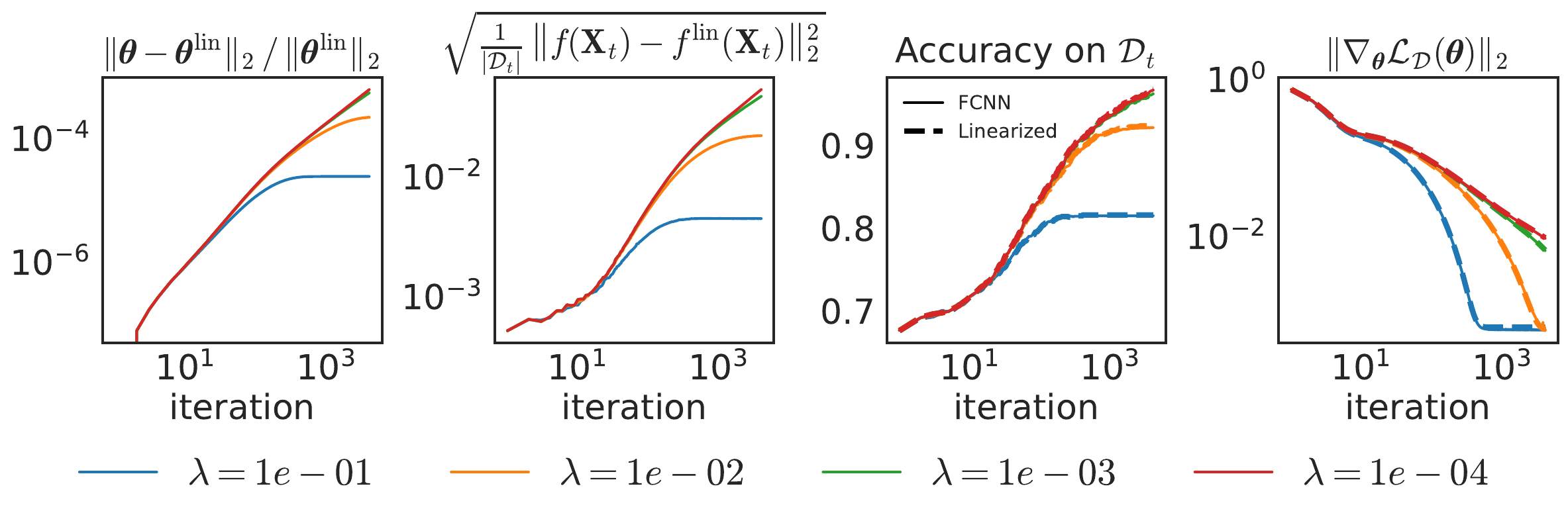}
  \vspace{-2\baselineskip}
  \caption{\textbf{Comparison of the training dynamics of a FCNN and its linearization under different values of \( \lambda \).} The left two panes show the relative \( \ell_2 \) distance between the parameters of FCNN and its linearization, and the RMSE between their predictions on \( \mathcal{D}_t \). The right two panes show the accuracies of the FCNN and its linearization on \( \mathcal{D}_t \), and the gradient norms of their parameters. We observe that larger values of \( \lambda \) make the FCNN track its linearization more closely and converge faster, but at the cost of worse test accuracy.}
  \vspace{-\baselineskip}
  \label{fig:figure_2}
\end{figure}

We observe that larger values of \( \lambda \) make the training dynamics of the FCNN more closely resemble that of its linearization, as evidenced by the small relative \( \ell_2 \) distance between their parameters and small difference between their outputs throughout training. In addition, \( \|\nabla_{\boldsymbol{\theta}}\mathcal{L}_{\mathcal{D}}(\boldsymbol{\theta})\|_2 \) approaches \( 0 \) more rapidly for larger values of \( \lambda \), indicating that, when trained with \( \lambda \| \boldsymbol{\theta} - \boldsymbol{\theta}' \|_2^2 \), both the model and its linearization converge faster under stronger regularization. However, this comes at the cost of worse test accuracy, as larger values of \( \lambda \) lead to progressively poorer test accuracy.

Since the dual representation requires the model to be linearizable, and the original representation also benefits from working with linear models, because they make the empirical risk strictly convex in \( \boldsymbol{\theta} \), the choice of \( \lambda \) plays an important role in the effectiveness of \( \boldsymbol{\theta} \)-space and \( \Delta \boldsymbol{\alpha} \)-space methods when they are applied to nonlinear models. This sensitivity to \( \lambda \) is illustrated in Figure \ref{fig:figure_3}, where data points from both classes are unlearned from the aforementioned FCNN, trained with different values of \( \lambda \), using both methods. It can be seen that both methods produce unlearned model that more faithfully match the retrained model, when applied to the FCNN trained with larger value of \( \lambda \).

\subsection{Infinitely Wide Neural Networks}

We showcase the applicability of \( \Delta \boldsymbol{\alpha} \)-space method to infinitely wide neural networks for estimating changes in model outputs and loss induced by data point removal. We consider two infinitely wide neural networks derived from a FCNN with three hidden layers and ReLU activations, and from a CNN with three convolutional layers and ReLU activations, respectively. The infinitely wide FCNN is trained by minimizing \( \hat{\mathcal{L}}_\mathcal{D} \) with cross entropy loss and \( \lambda = 10^{-1} \) on a subset of MNIST consisting of \( 1000 \) data points from each of the \( 10 \) classes, whereas the infinitely wide CNN is trained by minimizing \( \hat{\mathcal{L}}_\mathcal{D} \) with squared error loss and \( \lambda = 10^{-1} \) on a subset of CIFAR-10 consisting of \( 500 \) data points from each of classes \( 0 \) and \( 1 \). In both cases, optimization is performed in function space by Kernel Gradient Descent (KGD) \cite{jacot2018neural}
\begin{equation}
    f^{(k+1)}(\mathbf{X}) = f^{(k)}(\mathbf{X}) - \eta \cdot \left( \mathbf{K}^\infty \nabla_{f^{(k)}(\mathbf{X})} \hat{\mathcal{L}}_{\mathcal{D}} + \lambda \cdot (f^{(k)}(\mathbf{X}) - f^{(0)}(\mathbf{X}))\right),
\label{eq:KGD}
\end{equation}
where \( f^{(k)}(\mathbf{X}) \in \mathbb{R}^{d_\text{out}|\mathcal{D}|} \) denotes the outputs of infinitely wide neural network at all training data points at the \( k \)-th training iteration, and \( \mathbf{K}^\infty \coloneq \mathbf{K}^\infty(\mathbf{X}, \mathbf{X}) \) denotes the analytical NTK matrix \footnote{\( \mathbf{K}^\infty \) can be written as \( \boldsymbol{\Sigma}^\infty \otimes \mathbf{I}_{d_\text{out}} \) for some \( \boldsymbol{\Sigma}^\infty \in \mathbb{R}^{|\mathcal{D}| \times |\mathcal{D}|} \) when the infinitely wide neural network uses a fully connected output layer \cite{neuraltangents2020}. This can significantly reduce storage space for \( \mathbf{K}^\infty \) and the computation burden for evaluating expressions involving \( \mathbf{K}^\infty \).}. During training, \( f^{(0)}(\mathbf{X}) \) is initialized to \( \mathbf{0} \) and KGD is run for \( 5000 \) epochs. A derivation of Equation \ref{eq:KGD} from gradient descent in parameter space can be found in Appendix \ref{A:Derivations}.

For each model-dataset pair, the model is trained on the constructed dataset and on a retain dataset obtained by randomly removing \( 50\% \) of data points from each class of the dataset, after which the actual changes in model outputs and loss due to the removal of data points are evaluated at \( 10 \) randomly selected test data points via \( f_r^\star(\mathbf{x}_t) - f^\star(\mathbf{x}_t) \) and \( \ell(f_r^\star(\mathbf{x}_t), \mathbf{y}_t) - \ell(f^\star(\mathbf{x}_t), \mathbf{y}_t) \), where \( f_r^\star(\mathbf{x}_t) \) and \( f^\star(\mathbf{x}_t) \) denote the outputs of the retrained and original infinitely wide neural networks at test data point \( \mathbf{x}_t \) after training and are computed by \( \mathbf{K}^{\infty}(\mathbf{x}_t, \mathbf{X}_r) \boldsymbol{\alpha}_r^\star + f^{(0)}(\mathbf{x}_t) \) and \( \mathbf{K}^{\infty}(\mathbf{x}_t, \mathbf{X}) \boldsymbol{\alpha}^\star + f^{(0)}(\mathbf{x}_t) \) with \( \boldsymbol{\alpha}_r^\star \coloneq -\frac{1}{\lambda}  \nabla_{f_r^\star(\mathbf{X}_r)} \hat{\mathcal{L}}_{\mathcal{D}_r} \) and \( \boldsymbol{\alpha}^\star \coloneq -\frac{1}{\lambda} \nabla_{f^\star(\mathbf{X})} \hat{\mathcal{L}}_{\mathcal{D}} \). The estimated changes in model outputs and loss are obtained by solving linear system \ref{eq:reduced linear system}, and evaluating  \( \mathbf{K}_t \Delta \boldsymbol{\alpha}_r^\star \) and Equation \ref{eq:change in loss delta alpha} at the test data points. A comparison of the actual and estimated differences in model outputs and loss is summarized in Figure \ref{fig:figure_4}, where, for the outputs of infinitely wide FCNN, all \( 10 \) output dimensions are shown. It can be seen that the estimates provided by \( \Delta \boldsymbol{\alpha} \)-space method are accurate for both infinitely wide neural networks.

\begin{wrapfigure}{r}{0.5\textwidth}
  \centering
  \vspace{-\baselineskip}
  \includegraphics[width=0.5\textwidth]{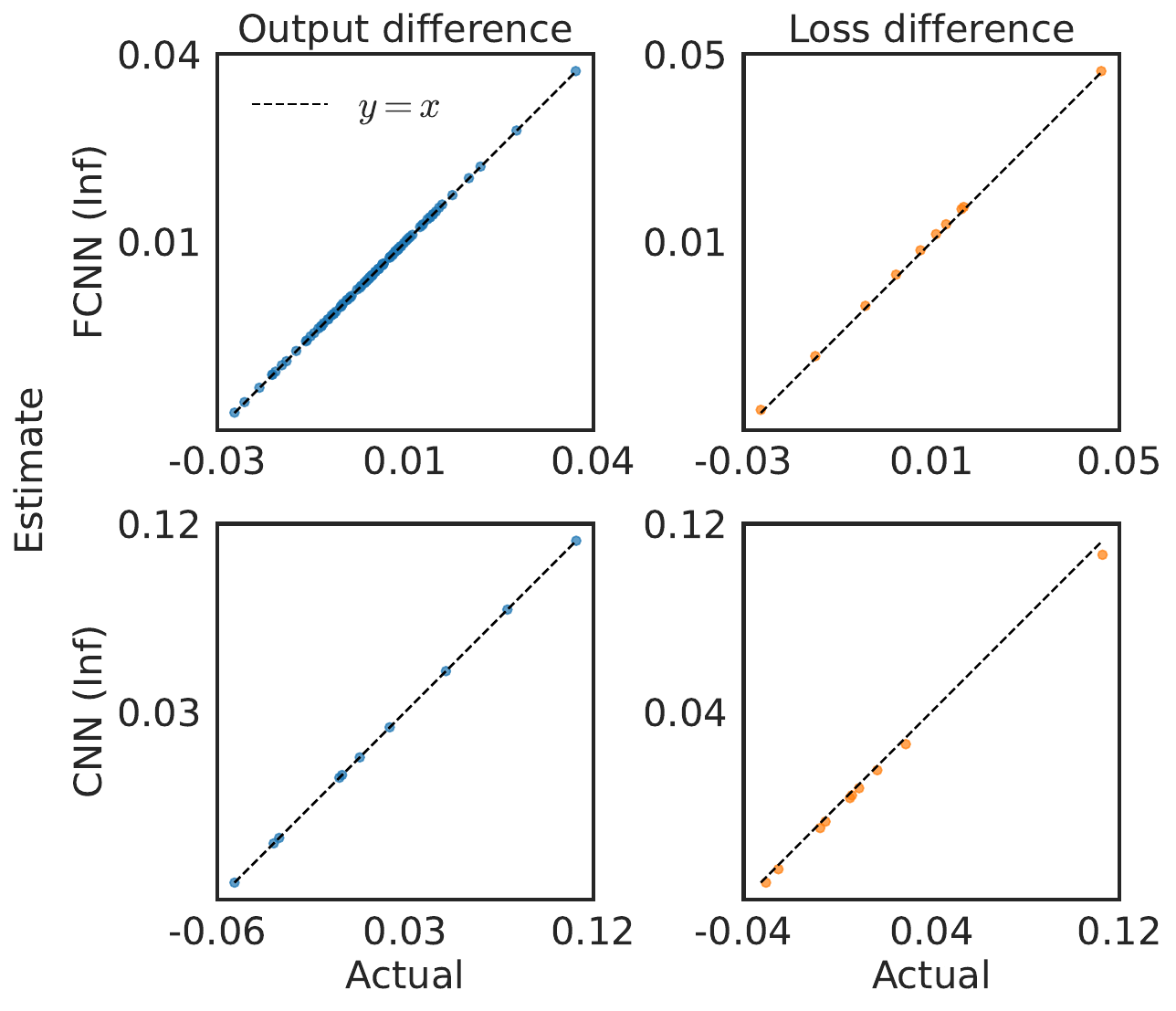}
  \vspace{-1.5\baselineskip}
  \caption{\textbf{Comparison of estimated and actual changes in the outputs and loss of two infinitely wide neural networks at \( 10 \) test data points after removing \( 50\% \) of data points from \( \mathcal{D} \).}}
  \label{fig:figure_4}
  \vspace{-\baselineskip} 
\end{wrapfigure}

\section{Related Work}

Our work builds on the work of \citet{koh2017understanding}, who applied influence functions from robust statistics to machine learning to study the effect of perturbing a training data point on model parameters, loss, and outputs. In their work, they identified the scalability issue of influence functions, namely the evaluation of the IHVP becomes computationally expensive as dataset size and model size increase, and consequently proposed to use a stochastic estimation method to approximate these quantities. However, \citet{basu2021influence} later observed that the stochastic estimation of the IHVP can make influence estimates unreliable in deep learning settings. Several later works focus on improving the scalability of influence functions in \( \boldsymbol{\theta} \)-space by developing more efficient methods for approximating the IHVP. For example, \citet{schioppa2022scaling} scale up influence functions by using Arnoldi iteration to approximate the IHVP, and demonstrate applicability to vision transformers with up to 300 million parameters, while \citet{grosse2023studying} employ the Eigenvalue-corrected Kronecker-Factored Approximate Curvature (EK-FAC) to approximate the IHVP, extending influence functions to large language models with up to 52 billion parameters.

Owing to its ability to predict the influence of perturbing training data points on model parameters, influence functions have been widely used for machine unlearning. In particular, influence functions are used by \citet{guo2020certified} to achieve certified data removal, which guarantees that the unlearned model produced by their method is indistinguishable from one that was never trained on the removed data points, and \citet{warnecke2021machine} further apply influence functions to enable the unlearning of features and labels by treating feature and label changes as perturbations to training data points.

Linearizable models have also been studied in the literature. Motivated by the works of \citet{du2019gradient} and \citet{allen2019convergence}, which study the convergence of gradient descent to global minima in overparameterized neural networks and prove that such networks can rapidly drive the training loss to zero while remaining close to their initialization throughout training, \citet{lee2019wide} investigate the linearizability of such models. In particular, they show that, in the infinite width limit, the training dynamics of deep FCNNs trained by full batch gradient descent on mean squared loss coincide exactly with those of their linearizations, and further demonstrate empirically that this approximation remains accurate in finite width settings, across various model architectures, optimizers, and loss functions, when model size is large relative to dataset size.

\section{Conclusion} \label{sec:Conclusion}

We presented a dual representation of the influence functions whose computational complexity scales with dataset size rather than model size. We showed analytically that this representation allows the influence of removing data points on parameters to be computed more efficiently than the original representation when the number of model parameters is large relative to the product of model output dimension and retain dataset size. Furthermore, the dual representation also makes it possible to analyze the influence on model outputs and loss in the limit where the number of model parameters goes to infinity. Experimental results obtained by comparing the dual and original representations of influence functions on different machine unlearning tasks further confirmed the computational advantages of the dual representation in settings where working directly in parameter space is difficult or infeasible. Nonetheless, the limitations of the dual representation are that it is applicable only to linearizable models and requires the computation and storage of the NTK matrix, making the usage of this representation expensive for datasets with a large number of data points and classes.



\newpage

{
\small
\bibliographystyle{unsrtnat}
\bibliography{reference}

@inproceedings{koh2017understanding,
  title={Understanding black-box predictions via influence functions},
  author={Koh, Pang Wei and Liang, Percy},
  booktitle={International Conference on Machine Learning (ICML)},
  year={2017}
}

@article{nguyen2025survey,
  title={A survey of machine unlearning},
  author={Nguyen, Thanh Tam and Huynh, Thanh Trung and Ren, Zhao and Nguyen, Phi Le and Liew, Alan Wee-Chung and Yin, Hongzhi and Nguyen, Quoc Viet Hung},
  journal={ACM Transactions on Intelligent Systems and Technology},
  year={2025}
}

@inproceedings{lee2019wide,
  title={Wide neural networks of any depth evolve as linear models under gradient descent},
  author={Lee, Jaehoon and Xiao, Lechao and Schoenholz, Samuel and Bahri, Yasaman and Novak, Roman and Sohl-Dickstein, Jascha and Pennington, Jeffrey},
  booktitle= {Advances in Neural Information Processing Systems (NeurIPS)},
  year= {2019}
}

@inproceedings{jacot2018neural,
  title={Neural tangent kernel: Convergence and generalization in neural networks},
  author={Jacot, Arthur and Gabriel, Franck and Hongler, Cl{\'e}ment},
  booktitle={Advances in Neural Information Processing Systems (NeurIPS)},
  year={2018}
}

@software{jax2018github,
  author={James Bradbury and Roy Frostig and Peter Hawkins and Matthew James Johnson and Chris Leary and Dougal Maclaurin and George Necula and Adam Paszke and Jake Vander{P}las and Skye Wanderman-{M}ilne and Qiao Zhang},
  title={{JAX}: composable transformations of {P}ython+{N}um{P}y programs},
  url={http://github.com/jax-ml/jax},
  year={2018}
}

@inproceedings{neuraltangents2020,
    title={Neural Tangents: Fast and Easy Infinite Neural Networks in Python},
    author={Roman Novak and Lechao Xiao and Jiri Hron and Jaehoon Lee and Alexander A. Alemi and Jascha Sohl-Dickstein and Samuel S. Schoenholz},
    booktitle={International Conference on Learning Representations (ICLR)},
    year={2020},
    url={https://github.com/google/neural-tangents}
}

@inproceedings{novak2022fast,
    title={Fast Finite Width Neural Tangent Kernel},
    author={Roman Novak and Jascha Sohl-Dickstein and Samuel S. Schoenholz},
    booktitle={International Conference on Machine Learning (ICML)},
    year={2022}
}

@inproceedings{hron2020infinite,
    title={Infinite attention: NNGP and NTK for deep attention networks},
    author={Jiri Hron and Yasaman Bahri and Jascha Sohl-Dickstein and Roman Novak},
    booktitle={International Conference on Machine Learning (ICML)},
    year={2020}
}

@article{sohl2020infinite,
  title={On the infinite width limit of neural networks with a standard parameterization},
  author={Sohl-Dickstein, Jascha and Novak, Roman and Schoenholz, Samuel S and Lee, Jaehoon},
  journal={arXiv preprint},
  year={2020}
}

@inproceedings{han2022fast,
    title={Fast Neural Kernel Embeddings for General Activations},
    author={Insu Han and Amir Zandieh and Jaehoon Lee and Roman Novak and Lechao Xiao and Amin Karbasi},
    booktitle={Advances in Neural Information Processing Systems (NeurIPS)},
    year={2022},
}

@inproceedings{basu2021influence,
  title={Influence Functions in Deep Learning Are Fragile},
  author={Basu, S and Pope, P and Feizi, S},
  booktitle={International Conference on Learning Representations (ICLR)},
  year={2021}
}

@inproceedings{schioppa2022scaling,
  title={Scaling up influence functions},
  author={Schioppa, Andrea and Zablotskaia, Polina and Vilar, David and Sokolov, Artem},
  booktitle={AAAI Conference on Artificial Intelligence},
  year={2022}
}

@inproceedings{guo2020certified,
  title={Certified data removal from machine learning models},
  author={Guo, Chuan and Goldstein, Tom and Hannun, Awni and Van Der Maaten, Laurens},
  booktitle={International Conference on Machine Learning (ICML)},
  year={2020}
}

@article{grosse2023studying,
  title={Studying large language model generalization with influence functions},
  author={Grosse, Roger and Bae, Juhan and Anil, Cem and Elhage, Nelson and Tamkin, Alex and Tajdini, Amirhossein and Steiner, Benoit and Li, Dustin and Durmus, Esin and Perez, Ethan and others},
  journal={arXiv preprint},
  year={2023}
}

@inproceedings{warnecke2021machine,
  title={Machine unlearning of features and labels},
  author={Warnecke, Alexander and Pirch, Lukas and Wressnegger, Christian and Rieck, Konrad},
  booktitle={Network and Distributed System Security (NDSS) Symposium},
  year={2023}
}

@inproceedings{du2019gradient,
  title={Gradient descent finds global minima of deep neural networks},
  author={Du, Simon and Lee, Jason and Li, Haochuan and Wang, Liwei and Zhai, Xiyu},
  booktitle={International Conference on Machine Learning (ICML)},
  year={2019}
}

@inproceedings{allen2019convergence,
  title={A convergence theory for deep learning via over-parameterization},
  author={Allen-Zhu, Zeyuan and Li, Yuanzhi and Song, Zhao},
  booktitle={International Conference on Machine Learning (ICML)},
  year={2019}
}

@article{lecun2002gradient,
  title={Gradient-based learning applied to document recognition},
  author={LeCun, Yann and Bottou, L{\'e}on and Bengio, Yoshua and Haffner, Patrick},
  journal={Proceedings of the IEEE},
  year={2002}
}

@article{krizhevsky2009learning,
  title={Learning multiple layers of features from tiny images},
  author={Krizhevsky, Alex and Hinton, Geoffrey},
  year={2009}
}
}


\newpage

\appendix

\setcounter{equation}{0}
\renewcommand{\theequation}{A.\arabic{equation}}

\section{Proofs} \label{A:Proofs}

We present proofs for the propositions stated in Section \ref{subsec:Dual Representation}. The proofs for the setting in which \( \| \boldsymbol{\theta} - \boldsymbol{\theta}' \|_2^2 \) is replaced by \( \| \boldsymbol{\theta} \|_2^2 \) are analogous.

\begin{proof}[Proof of Proposition \ref{prop:3.1}]
    Since \( \hat{\boldsymbol{\theta}}^\star \) and \( \hat{\boldsymbol{\theta}}_r^\star \) are the minimizers of \( \arg \, \min_{\boldsymbol{\theta} \in \mathbb{R}^{d_{\boldsymbol{\theta}}}} \hat{\mathcal{L}}_{\mathcal{D}}(\boldsymbol{\theta}) \) and \( \arg \, \min_{\boldsymbol{\theta} \in \mathbb{R}^{d_{\boldsymbol{\theta}}}} \hat{\mathcal{L}}_{\mathcal{D}_r}(\boldsymbol{\theta}) \), respectively, by first-order necessary condition for optimality, they must satisfy
    \begin{equation*}
        \begin{aligned}
            0 &= \nabla_{\boldsymbol{\theta}} \hat{\mathcal{L}}_{\mathcal{D}}(\hat{\boldsymbol{\theta}}^\star) = \nabla_{\boldsymbol{\theta}} f(\mathbf{X}, \boldsymbol{\theta}')^\top \nabla_{f^\text{lin}(\mathbf{X}, \hat{\boldsymbol{\theta}}^\star)} \hat{\mathcal{L}}_{\mathcal{D}} + \lambda (\hat{\boldsymbol{\theta}}^\star - \boldsymbol{\theta}')  \\
            0 &= \nabla_{\boldsymbol{\theta}} \hat{\mathcal{L}}_{\mathcal{D}_r}(\hat{\boldsymbol{\theta}}_r^\star) = \nabla_{\boldsymbol{\theta}} f(\mathbf{X}_r, \boldsymbol{\theta}')^\top \nabla_{f^\text{lin}(\mathbf{X}_r, \hat{\boldsymbol{\theta}}_r^\star)} \hat{\mathcal{L}}_{\mathcal{D}_r} + \lambda (\hat{\boldsymbol{\theta}}_r^\star - \boldsymbol{\theta}'),
        \end{aligned}
    \end{equation*}
    where \( \nabla_{\boldsymbol{\theta}} f(\mathbf{X}, \boldsymbol{\theta}') \in \mathbb{R}^{d_\text{out}|\mathcal{D}| \times d_{\boldsymbol{\theta}}} \) and \( \nabla_{f^\text{lin}(\mathbf{X}, \hat{\boldsymbol{\theta}}^\star)} \hat{\mathcal{L}}_{\mathcal{D}} \in \mathbb{R}^{d_\text{out}|\mathcal{D}|} \) are defined as
    \begin{equation*}
        \nabla_{\boldsymbol{\theta}} f(\mathbf{X}, \boldsymbol{\theta}') \coloneq
        \begin{bmatrix}
        \nabla_{\boldsymbol{\theta}} f(\mathbf{x}^{(1)}, \boldsymbol{\theta}') \\
        \vdots \\
        \nabla_{\boldsymbol{\theta}} f(\mathbf{x}^{(|\mathcal{D}|)}, \boldsymbol{\theta}')
        \end{bmatrix} \quad \nabla_{f^\text{lin}(\mathbf{X}, \hat{\boldsymbol{\theta}}^\star)} \hat{\mathcal{L}}_{\mathcal{D}} \coloneq \frac{1}{|\mathcal{D}|}
        \begin{bmatrix}
            \nabla_f\ell(f^\text{lin}(\mathbf{x}^{(1)}, \hat{\boldsymbol{\theta}}^\star), \mathbf{y}^{(1)}) \\
            \vdots \\
            \nabla_f\ell(f^\text{lin}(\mathbf{x}^{(|\mathcal{D}|)}, \hat{\boldsymbol{\theta}}^\star), \mathbf{y}^{(|\mathcal{D}|)})
        \end{bmatrix},
    \end{equation*}
    and \( \nabla_{\boldsymbol{\theta}} f(\mathbf{X}_r, \boldsymbol{\theta}') \) and \( \nabla_{f^\text{lin}(\mathbf{X}_r, \hat{\boldsymbol{\theta}}_r^\star)} \hat{\mathcal{L}}_{\mathcal{D}_r} \) are defined in the same way. Rearranging terms yields
    \begin{equation*}
        \begin{aligned}
            \hat{\boldsymbol{\theta}}^\star &= \nabla_{\boldsymbol{\theta}} f(\mathbf{X}, \boldsymbol{\theta}')^\top \boldsymbol{\alpha}^\star + \boldsymbol{\theta}' \\
            \hat{\boldsymbol{\theta}}_r^\star &= \nabla_{\boldsymbol{\theta}} f(\mathbf{X}_r, \boldsymbol{\theta}')^\top \boldsymbol{\alpha}_r^\star + \boldsymbol{\theta}',
        \end{aligned}
    \end{equation*}
    where \( \boldsymbol{\alpha}^\star \coloneq -\frac{1}{\lambda} \nabla_{f^\text{lin}(\mathbf{X}, \hat{\boldsymbol{\theta}}^\star)} \hat{\mathcal{L}}_{\mathcal{D}} \in \mathbb{R}^{d_\text{out}|\mathcal{D}|} \) and \( \boldsymbol{\alpha}_r^\star \coloneq  -\frac{1}{\lambda} \nabla_{f^\text{lin}(\mathbf{X}_r, \hat{\boldsymbol{\theta}}_r^\star)} \hat{\mathcal{L}}_{\mathcal{D}_r} \in \mathbb{R}^{d_\text{out}|\mathcal{D}_r|} \). Assume, without loss of generality, that data points in \( \mathcal{D}_f \) are the first \( |\mathcal{D}_f| \) data points in \( \mathcal{D} \), then by defining \( \tilde{\boldsymbol{\alpha}}_r^\star \coloneq \begin{bmatrix}
        \mathbf{0}^\top, \boldsymbol{\alpha}_r^{\star \top}
    \end{bmatrix}^\top \in \mathbb{R}^{d_\text{out}|\mathcal{D}|} \), \( \hat{\boldsymbol{\theta}}_r^\star \) can be rewritten as \( \hat{\boldsymbol{\theta}}_r^\star = \nabla_{\boldsymbol{\theta}} f(\mathbf{X}, \boldsymbol{\theta}')^\top \tilde{\boldsymbol{\alpha}}_r^\star + \boldsymbol{\theta}' \). Subtracting \( \hat{\boldsymbol{\theta}}^\star \) from \( \hat{\boldsymbol{\theta}}_r^\star \) yields
    \begin{equation*}
        \hat{\boldsymbol{\theta}}_r^\star - \hat{\boldsymbol{\theta}}^\star = \nabla_{\boldsymbol{\theta}} f(\mathbf{X}, \boldsymbol{\theta}')^\top \Delta \boldsymbol{\alpha}_r^\star,
    \end{equation*}
    where
    \begin{equation*}
        \Delta \boldsymbol{\alpha}_r^\star = \tilde{\boldsymbol{\alpha}}_r^\star - \boldsymbol{\alpha}^\star =
        \begin{bmatrix}
            \frac{1}{\lambda} \nabla_{f^\text{lin}(\mathbf{X}_f, \hat{\boldsymbol{\theta}}^\star)} \hat{\mathcal{L}}_{\mathcal{D}}\\
            -\frac{1}{\lambda} \left( \nabla_{f^\text{lin}(\mathbf{X}_r, \hat{\boldsymbol{\theta}}_r^\star)} \hat{\mathcal{L}}_{\mathcal{D}_r} - \nabla_{f^\text{lin}(\mathbf{X}_r, \hat{\boldsymbol{\theta}}^\star)} \hat{\mathcal{L}}_{\mathcal{D}} \right)
        \end{bmatrix}.
    \end{equation*}
\end{proof}

\begin{proof}[Proof of Proposition \ref{prop:3.2}]
    By definition of minimizer and the fact that \( \mathcal{C} \subset \mathbb{R}^{d_{\boldsymbol{\theta}}} \), we know \( \hat{\boldsymbol{\theta}}_r^\star \coloneq \arg \, \min_{\boldsymbol{\theta} \in \mathbb{R}^{d_{\boldsymbol{\theta}}}} \hat{\mathcal{L}}_{\mathcal{D}_r}(\boldsymbol{\theta}) \) and \( \hat{\boldsymbol{\theta}}_{r,c}^\star \coloneq \arg \, \min_{\boldsymbol{\theta} \in \mathcal{C}} \hat{\mathcal{L}}_{\mathcal{D}_r}(\boldsymbol{\theta}) \) satisfy
    \begin{equation*}
        \begin{aligned}
            &\hat{\mathcal{L}}_{\mathcal{D}_r}(\hat{\boldsymbol{\theta}}_r^\star) \leq \hat{\mathcal{L}}_{\mathcal{D}_r}(\boldsymbol{\theta}), \ \forall \boldsymbol{\theta} \in \mathcal{C} \\
            &\hat{\mathcal{L}}_{\mathcal{D}_r}(\hat{\boldsymbol{\theta}}_{r,c}^\star) \leq \hat{\mathcal{L}}_{\mathcal{D}_r}(\boldsymbol{\theta}), \ \forall \boldsymbol{\theta} \in \mathcal{C}.
        \end{aligned}
    \end{equation*}
    Thus, \( \hat{\boldsymbol{\theta}}_r^\star \) and \( \hat{\boldsymbol{\theta}}_{r,c}^\star \) are both minimizers of \( \arg \, \min_{\boldsymbol{\theta} \in \mathcal{C}} \hat{\mathcal{L}}_{\mathcal{D}_r}(\boldsymbol{\theta}) \). When \( \ell \) is convex with respect to the model output, \( \arg \, \min_{\boldsymbol{\theta} \in \mathcal{C}} \hat{\mathcal{L}}_{\mathcal{D}_r}(\boldsymbol{\theta}) \) is a convex optimization problem with a strictly convex objective function. As a result, the global minimizer is unique, and hence \( \hat{\boldsymbol{\theta}}_r^\star = \hat{\boldsymbol{\theta}}_{r,c}^\star \).
\end{proof}

\begin{proof}[Proof of Proposition \ref{prop:3.3}]
    When \( \ell \) is convex with respect to the model output and \( \lambda > 0 \), both \( \arg \min_{\boldsymbol{\Delta \alpha} \in \mathbb{R}^{d_\text{out} |\mathcal{D}|}} \tilde{\mathcal{L}}_{\mathcal{D}_r}(\Delta \boldsymbol{\alpha}) \) and \( \arg \, \min_{\boldsymbol{\theta} \in \mathcal{C}} \hat{\mathcal{L}}_{\mathcal{D}_r}(\boldsymbol{\theta}) \) are convex optimization problems with strictly convex objective functions. As a result, their minimizers \( \boldsymbol{\Delta \alpha}_r^\star \) and \( \hat{\boldsymbol{\theta}}_{r,c}^\star \) are unique. If \( \nabla_{\boldsymbol{\theta}}f\left(\mathbf{X}, \boldsymbol{\theta}'\right) \) has full row rank, then \( \phi(\cdot) \) is injective and for every \( \boldsymbol{\theta} \in \operatorname{Im}(\phi) \) there exists exactly one \( \Delta \boldsymbol{\alpha} \) such that \( \phi(\Delta \boldsymbol{\alpha}) = \boldsymbol{\theta} \). Since, by proposition \ref{prop:3.1} and \ref{prop:3.2}, we know \( \hat{\boldsymbol{\theta}}_{r,c}^\star = \phi(\Delta \boldsymbol{\alpha}_r^\star) \) and \( \hat{\boldsymbol{\theta}}_{r,c}^\star \in \operatorname{Im}(\phi) \), we can conclude that \( \boldsymbol{\Delta \alpha}_r^\star \xrightleftharpoons[\phi^{-1}(\cdot)]{\phi(\cdot)} \hat{\boldsymbol{\theta}}_{r,c}^\star \).
\end{proof}

\newpage

\section{Derivations} \label{A:Derivations}

We present detailed derivations for the equations shown in Section \ref{subsec:Influence Functions} and \ref{subsec:Influence on Delta alpha}. The derivations for the setting in which \( \| \boldsymbol{\theta} - \boldsymbol{\theta}' \|_2^2 \) is replaced by \( \| \boldsymbol{\theta} \|_2^2 \) are analogous.

\subsection{Derivation for Equation \ref{eq:influence on theta}}
\label{subsec:Derivations for influence on theta}
The following derivation, adapted from \cite{koh2017understanding}, serves to introduce our notation and to generalize their single data point removal setting to a multiple data points removal setting.

In the multiple data points removal setting, we define the upweighted empirical risk as \( \mathcal{L}_{\mathcal{D}}(\boldsymbol{\theta}, \epsilon) \coloneqq \mathcal{L}_{\mathcal{D}}(\boldsymbol{\theta}) + |\mathcal{D}_f|\mathcal{L}_{\mathcal{D}_f}(\boldsymbol{\theta}) \epsilon \). Substituting \( \mathcal{L}_{\mathcal{D}}(\boldsymbol{\theta}) \) and \( \mathcal{L}_{\mathcal{D}_f}(\boldsymbol{\theta}) \) with their respective definitions and setting \( \epsilon = -\frac{1}{|\mathcal{D}|} \) yields \( \mathcal{L}_{\mathcal{D}}(\boldsymbol{\theta}, -\frac{1}{|\mathcal{D}|}) = \frac{|\mathcal{D}_r|}{|\mathcal{D}|} \mathcal{L}_{\mathcal{D}_r}(\boldsymbol{\theta}) \). Since \( \frac{|
\mathcal{D}_r|}{|\mathcal{D}|} > 0\) and \(  \mathcal{L}_{\mathcal{D}}(\boldsymbol{\theta}, -\frac{1}{|\mathcal{D}|}) \) is a scalar multiple of \( \mathcal{L}_{\mathcal{D}_r}(\boldsymbol{\theta}) \), the two optimization problems \( \arg \, \min_{\boldsymbol{\theta} \in \mathbb{R}^{d_{\boldsymbol{\theta}}}} \mathcal{L}_{\mathcal{D}_r}(\boldsymbol{\theta}) \) and \( \arg \, \min_{\boldsymbol{\theta} \in \mathbb{R}^{d_{\boldsymbol{\theta}}}} \mathcal{L}_{\mathcal{D}}(\boldsymbol{\theta}, -\frac{1}{|\mathcal{D}|}) \) share the same minimizer, i.e., \( \boldsymbol{\theta}_r^\star = \boldsymbol{\theta}^\star(-\frac{1}{|\mathcal{D}|}) \). 

By first order necessary condition of optimality, \( \boldsymbol{\theta}^\star \) and \( \boldsymbol{\theta}^\star(\epsilon) \) must satisfy \( \nabla_{\boldsymbol{\theta}} \mathcal{L}_{\mathcal{D}}(\boldsymbol{\theta}^\star) = \mathbf{0} \) and \( \nabla_{\boldsymbol{\theta}} \mathcal{L}_{\mathcal{D}}(\boldsymbol{\theta}^\star(\epsilon), \epsilon) = \mathbf{0} \), respectively. When \( \epsilon \) is small, \( \boldsymbol{\theta}^\star(\epsilon) \) is close to \(  \boldsymbol{\theta}^\star \) and \( \nabla_{\boldsymbol{\theta}} \mathcal{L}_{\mathcal{D}}(\boldsymbol{\theta}^\star(\epsilon), \epsilon) \) can be accurately approximated by its first-order Taylor expansion around \( \boldsymbol{\theta}^\star \), so the optimality condition for \( \boldsymbol{\theta}^\star(\epsilon) \) becomes \( \nabla_{\boldsymbol{\theta}} \mathcal{L}_\mathcal{D}(\boldsymbol{\theta}^\star, \epsilon) + \nabla_{\boldsymbol{\theta}}^2\mathcal{L}_\mathcal{D}(\boldsymbol{\theta}^\star, \epsilon) \left( \boldsymbol{\theta}^\star(\epsilon) - \boldsymbol{\theta}^\star \right) \approx \mathbf{0} \). Rearranging terms and taking first- and second-order derivatives of \( \mathcal{L}_{\mathcal{D}}(\boldsymbol{\theta}, \epsilon) \) with respect to \( \boldsymbol{\theta} \) yields
\begin{equation*}
    \begin{aligned}
        \boldsymbol{\theta}^\star(\epsilon) &\approx \boldsymbol{\theta}^\star - \nabla_{\boldsymbol{\theta}}^2\mathcal{L}_\mathcal{D}(\boldsymbol{\theta}^\star, \epsilon)^{-1} \nabla_{\boldsymbol{\theta}} \mathcal{L}_\mathcal{D}(\boldsymbol{\theta}^\star, \epsilon) \\
        &= \boldsymbol{\theta}^\star - \left( \nabla_{\boldsymbol{\theta}}^2 \mathcal{L}_{\mathcal{D}}(\boldsymbol{\theta}^\star) + |\mathcal{D}_f| \nabla_{\boldsymbol{\theta}}^2 \mathcal{L}_{\mathcal{D}_f}(\boldsymbol{\theta}^\star) \epsilon \right)^{-1} |\mathcal{D}_f| \nabla_{\boldsymbol{\theta}} \mathcal{L}_{\mathcal{D}_f}(\boldsymbol{\theta}^\star) \epsilon.
    \end{aligned}
\end{equation*}
When \( \epsilon = -\frac{1}{|\mathcal{D}|} \), using the relation  \( \boldsymbol{\theta}_r^\star = \boldsymbol{\theta}^\star(-\frac{1}{|\mathcal{D}|}) \), the above equation becomes \( \boldsymbol{\theta}_r^\star \approx \boldsymbol{\theta}^\star + \frac{|\mathcal{D}_f|}{|\mathcal{D}|} \mathbf{H}_{\boldsymbol{\theta}^\star}^{-1} \nabla_{\boldsymbol{\theta}}\mathcal{L}_{\mathcal{D}_f}(\boldsymbol{\theta}^\star) \), where \( \mathbf{H}_{\boldsymbol{\theta}^\star}^{-1} \coloneqq ( \nabla_{\boldsymbol{\theta}}^2 \mathcal{L}_{\mathcal{D}}(\boldsymbol{\theta}^\star) - \frac{|\mathcal{D}_f|}{|\mathcal{D}|} \nabla_{\boldsymbol{\theta}}^2 \mathcal{L}_{\mathcal{D}_f}(\boldsymbol{\theta}^\star) )^{-1} \) can be further approximated by \( \nabla_{\boldsymbol{\theta}}^2\mathcal{L}_\mathcal{D}(\boldsymbol{\theta}^\star)^{-1} \) when \( \frac{|\mathcal{D}_f|}{|\mathcal{D}|} \) is small, due to the identity \( (\mathbf{A} + \epsilon \mathbf{X})^{-1} = \mathbf{A}^{-1} - \epsilon \mathbf{A}^{-1} \mathbf{X} \mathbf{A}^{-1} + \mathcal{O}(\epsilon^2) \).

\subsection{Derivation for Equation \ref{eq:influence on delta alpha}}

The derivation of Equation \ref{eq:influence on delta alpha} is identical to that of Equation \ref{eq:influence on theta}, with only a few minor tweaks. Specifically, by replacing \( \mathcal{L}_{\mathcal{D}}(\boldsymbol{\theta}, \epsilon) \) with \( \tilde{\mathcal{L}}_{\mathcal{D}}(\Delta\boldsymbol{\alpha}, \epsilon) \coloneq \tilde{\mathcal{L}}_{\mathcal{D}}(\Delta \boldsymbol{\alpha}) + |\mathcal{D}_f| \tilde{\mathcal{L}}_{\mathcal{D}_f}(\Delta \boldsymbol{\alpha}) \epsilon\), but otherwise following the same derivation as in Section \ref{subsec:Derivations for influence on theta}, one can show that
\begin{equation*}
    \Delta \boldsymbol{\alpha}_r^\star \approx \Delta \boldsymbol{\alpha}^\star + \frac{|\mathcal{D}_f|}{|\mathcal{D}|} \mathbf{H}_{\Delta \boldsymbol{\alpha}^\star}^{-1} \nabla_{\Delta \boldsymbol{\alpha}} \tilde{\mathcal{L}}_{\mathcal{D}_f}(\Delta \boldsymbol{\alpha}^\star),
\end{equation*}
where \( \mathbf{H}_{\Delta \boldsymbol{\alpha}^\star}^{-1} \coloneq \left( \nabla_{\boldsymbol{\Delta \alpha}}^2 \tilde{\mathcal{L}}_{\mathcal{D}}(\Delta \boldsymbol{\alpha}^\star) - \frac{|\mathcal{D}_f|}{|\mathcal{D}|} \nabla_{\Delta \boldsymbol{\alpha}}^2 \tilde{\mathcal{L}}_{\mathcal{D}_f}(\Delta \boldsymbol{\alpha}^\star) \right)^{-1} \) can be further approximated by \( \nabla_{\boldsymbol{\Delta \alpha}}^2 \tilde{\mathcal{L}}_{\mathcal{D}}(\Delta \boldsymbol{\alpha}^\star)^{-1} \) when \( \frac{|\mathcal{D}_f|}{|\mathcal{D}|} \) is small. To obtain Equation \ref{eq:influence on delta alpha}, it remains to show that \( \Delta \boldsymbol{\alpha}^\star = \mathbf{0} \), which follows from Proposition \ref{prop:3.2} and \ref{prop:3.3}. Specifically, these two propositions imply that, \( \Delta \boldsymbol{\alpha}^\star \), \( \hat{\boldsymbol{\theta}}^\star \) and \( \hat{\boldsymbol{\theta}}_c^\star \coloneq \arg \, \min_{\boldsymbol{\theta} \in \mathcal{C}} \hat{\mathcal{L}}_{\mathcal{D}}(\boldsymbol{\theta}) \) are related by
\begin{equation*}
    \boldsymbol{\Delta \alpha}^\star \xrightleftharpoons[\phi^{-1}(\cdot)]{\phi(\cdot)} \hat{\boldsymbol{\theta}}_c^\star = \hat{\boldsymbol{\theta}}^\star.
\end{equation*}
Therefore, \( \hat{\boldsymbol{\theta}}^\star = \hat{\boldsymbol{\theta}}^\star + \nabla_{\boldsymbol{\theta}} f\left(\mathbf{X}, \boldsymbol{\theta}'\right)^\top \Delta \boldsymbol{\alpha}^\star \) and rearranging terms yields \( \nabla_{\boldsymbol{\theta}} f\left(\mathbf{X}, \boldsymbol{\theta}'\right)^\top \Delta \boldsymbol{\alpha}^\star = \mathbf{0} \). Since \( \nabla_{\boldsymbol{\theta}} f\left(\mathbf{X}, \boldsymbol{\theta}'\right) \) is assumed to have full row rank, \( \nabla_{\boldsymbol{\theta}} f\left(\mathbf{X}, \boldsymbol{\theta}'\right)^\top \) has full column rank and a trivial null space, i.e., \( \operatorname{ker}(\nabla_{\boldsymbol{\theta}} f\left(\mathbf{X}, \boldsymbol{\theta}'\right)^\top) = \{ \mathbf{0} \} \), which implies that \( \Delta \boldsymbol{\alpha}^\star = \mathbf{0} \).

\subsection{Derivation for Equations \ref{eq:Hessian with repsect to delta alpha star} and \ref{eq:grad of reparameterized loss at 0}}

For any dataset \( \mathcal{D}_i \), by definition of \( \tilde{\mathcal{L}}_{\mathcal{D}_i}(\Delta\boldsymbol{\alpha}) \), \( \nabla_{\Delta \boldsymbol{\alpha}} \tilde{\mathcal{L}}_{\mathcal{D}_i}(\Delta \boldsymbol{\alpha}) \) and \(  \nabla_{\Delta \boldsymbol{\alpha}}^2 \tilde{\mathcal{L}}_{\mathcal{D}_i}(\Delta \boldsymbol{\alpha}) \) are given by
\begin{align}
    &\nabla_{\Delta \boldsymbol{\alpha}} \tilde{\mathcal{L}}_{\mathcal{D}_i}(\Delta \boldsymbol{\alpha}) = \mathbf{K}(\mathbf{X}, \mathbf{X}_i) \nabla_{g^\text{lin}(\mathbf{X}_i, \Delta \boldsymbol{\alpha})} \tilde{\mathcal{L}}_{\mathcal{D}_i} + \lambda \nabla_{\boldsymbol{\theta}} f(\mathbf{X}, \boldsymbol{\theta}') (\hat{\boldsymbol{\theta}}^\star - \boldsymbol{\theta}')  + \lambda \mathbf{K} \Delta \boldsymbol{\alpha} \label{eq:grad of reparameterized loss} \\
    &\nabla_{\Delta \boldsymbol{\alpha}}^2 \tilde{\mathcal{L}}_{\mathcal{D}_i}(\Delta \boldsymbol{\alpha}) = \mathbf{K}(\mathbf{X}, \mathbf{X}_i) \nabla_{g^\text{lin}(\mathbf{X}_i, \Delta \boldsymbol{\alpha})}^2 \tilde{\mathcal{L}}_{\mathcal{D}_i} \mathbf{K}(\mathbf{X}_i, \mathbf{X}) + \lambda \mathbf{K}, \label{eq:Hessian of reparameterized loss}
\end{align}
where \( \mathbf{K}(\mathbf{X}, \mathbf{X}_i) \coloneq \begin{bmatrix}\mathbf{K}(\mathbf{X}, \mathbf{x}^{(1)}), \cdots, \mathbf{K}(\mathbf{X}, \mathbf{x}^{(|\mathcal{D}_i|)})\end{bmatrix} \in \mathbb{R}^{d_\text{out}|\mathcal{D}| \times d_\text{out}|\mathcal{D}_i|} \),
\begin{equation*}
    \nabla_{g^\text{lin}(\mathbf{X}_i, \Delta \boldsymbol{\alpha})} \tilde{\mathcal{L}}_{\mathcal{D}_i} \coloneq \frac{1}{|\mathcal{D}_i|}
    \begin{bmatrix}
        \nabla_f \ell(g^\text{lin}(\mathbf{x}^{(1)}, \Delta \boldsymbol{\alpha}), \mathbf{y}^{(1)}) \\
        \vdots \\
        \nabla_f \ell(g^\text{lin}(\mathbf{x}^{(|\mathcal{D}_i|)}, \Delta \boldsymbol{\alpha}), \mathbf{y}^{(|\mathcal{D}_i|)})
    \end{bmatrix} \in \mathbb{R}^{d_\text{out}|\mathcal{D}_i|}.
\end{equation*}
and \( \nabla_{g^\text{lin}(\mathbf{X}_i, \Delta \boldsymbol{\alpha})}^2 \tilde{\mathcal{L}}_{\mathcal{D}_i} \in \mathbb{R}^{d_\text{out}|\mathcal{D}_i| \times d_\text{out}|\mathcal{D}_i|} \) is a block diagonal matrix whose blocks are
\begin{equation*}
    \frac{1}{|\mathcal{D}_i|} \nabla_f^2 \ell(g^\text{lin}(\mathbf{x}, \Delta \boldsymbol{\alpha}), \mathbf{y}) \in \mathbb{R}^{d_\text{out} \times d_\text{out}}, \ \forall (\mathbf{x}, \mathbf{y}) \in \mathcal{D}_i.
\end{equation*}
\( \nabla_{\Delta \boldsymbol{\alpha}} \tilde{\mathcal{L}}_{\mathcal{D}_f}(\Delta \boldsymbol{\alpha}) \), \( \nabla_{\Delta \boldsymbol{\alpha}}^2 \tilde{\mathcal{L}}_{\mathcal{D}_f}(\Delta \boldsymbol{\alpha}) \) and \( \nabla_{\Delta \boldsymbol{\alpha}}^2 \tilde{\mathcal{L}}_{\mathcal{D}}(\Delta \boldsymbol{\alpha}) \) are obtained by replacing \( \mathcal{D}_i \) in Equations \ref{eq:grad of reparameterized loss} and \ref{eq:Hessian of reparameterized loss} with \( \mathcal{D}_f \) and \( \mathcal{D} \).

Equation \ref{eq:grad of reparameterized loss at 0} is obtained by evaluating \( \nabla_{\Delta \boldsymbol{\alpha}} \tilde{\mathcal{L}}_{\mathcal{D}_f}(\Delta \boldsymbol{\alpha}) \) at \( \Delta \boldsymbol{\alpha}^\star = \mathbf{0} \),
\begin{equation*}
    \begin{aligned}
        \nabla_{\Delta \boldsymbol{\alpha}} \tilde{\mathcal{L}}_{\mathcal{D}_f}(\mathbf{0}) &= \mathbf{K}(\mathbf{X}, \mathbf{X}_f) \nabla_{g^\text{lin}(\mathbf{X}_f, \mathbf{0})} \tilde{\mathcal{L}}_{\mathcal{D}_f} + \lambda \nabla_{\boldsymbol{\theta}} f(\mathbf{X}, \boldsymbol{\theta}') (\hat{\boldsymbol{\theta}}^\star - \boldsymbol{\theta}')  \\
        &= \mathbf{K}(\mathbf{X}_f, \mathbf{X})^\top \nabla_{f^\text{lin}(\mathbf{X}_f, \hat{\boldsymbol{\theta}}^\star)} \hat{\mathcal{L}}_{\mathcal{D}_f} + \lambda \mathbf{K} \boldsymbol{\alpha}^\star,
    \end{aligned}
\end{equation*}
where we have used the fact that \( \hat{\boldsymbol{\theta}}^\star - \boldsymbol{\theta}'  = \nabla_{\boldsymbol{\theta}} f(\mathbf{X}, \boldsymbol{\theta}')^\top \boldsymbol{\alpha}^\star \) and \( g^\text{lin}(\cdot, \mathbf{0}) = f^\text{lin}(\cdot, \hat{\boldsymbol{\theta}}^\star) \). By definition of \( \mathbf{H}_{\Delta \boldsymbol{\alpha}} \), we have
\begin{equation*}
    \begin{aligned}
        \mathbf{H}_{\Delta \boldsymbol{\alpha}} &= \nabla_{\boldsymbol{\Delta \alpha}}^2 \tilde{\mathcal{L}}_{\mathcal{D}}(\Delta \boldsymbol{\alpha}) - \frac{|\mathcal{D}_f|}{|\mathcal{D}|} \nabla_{\Delta \boldsymbol{\alpha}}^2 \tilde{\mathcal{L}}_{\mathcal{D}_f}(\Delta \boldsymbol{\alpha})\\
        &= \frac{|\mathcal{D}_r|}{|\mathcal{D}|} \left( \mathbf{K}(\mathbf{X}_r, \mathbf{X})^\top \nabla_{g^\text{lin}(\mathbf{X}_r, \Delta \boldsymbol{\alpha})}^2 \tilde{\mathcal{L}}_{\mathcal{D}_r} \mathbf{K}(\mathbf{X}_r, \mathbf{X}) + \lambda \mathbf{K} \right),
    \end{aligned}
\end{equation*}
evaluating \( \mathbf{H}_{\Delta \boldsymbol{\alpha}} \) at \( \Delta \boldsymbol{\alpha}^\star = \mathbf{0} \) and using the equality \( g^\text{lin}(\cdot, \mathbf{0}) = f^\text{lin}(\cdot, \hat{\boldsymbol{\theta}}^\star) \) gives Equation \ref{eq:Hessian with repsect to delta alpha star}.

\subsection{Derivation for Equations \ref{eq:change in output delta alpha} and \ref{eq:change in loss delta alpha}}

The first-order Taylor expansion of \( g^\text{lin}(\mathbf{x}_t, \Delta \boldsymbol{\alpha}_r^\star) \) and \( \tilde{\mathcal{L}}_{\{ \mathbf{z}_t \}}(\Delta \boldsymbol{\alpha}_r^\star) \) around \( \Delta \boldsymbol{\alpha}^\star=0 \) gives
\begin{equation*}
    \begin{aligned}
        &g^\text{lin}(\mathbf{x}_t, \Delta \boldsymbol{\alpha}_r^\star) - g^\text{lin}(\mathbf{x}_t, \mathbf{0}) = \mathbf{K}(\mathbf{x}_t, \mathbf{X}) \Delta \boldsymbol{\alpha}_r^\star \\
        &\tilde{\mathcal{L}}_{\{ \mathbf{z}_t \}}(\Delta \boldsymbol{\alpha}_r^\star) - \tilde{\mathcal{L}}_{\{ \mathbf{z}_t \}}(\mathbf{0}) \approx \nabla_{\Delta \boldsymbol{\alpha}} \tilde{\mathcal{L}}_{\{ \mathbf{z}_t \}}(\mathbf{0})^\top \Delta \boldsymbol{\alpha}_r^\star.
    \end{aligned}
\end{equation*}
When \( g^\text{lin}(\mathbf{x}_t, \Delta \boldsymbol{\alpha}) \) and \( \tilde{\mathcal{L}}_{\{ \mathbf{z}_t \}}(\Delta \boldsymbol{\alpha}) \) are evaluated at \( \mathbf{0} \), by their definitions, we have \( g^\text{lin}(\mathbf{x}_t, \mathbf{0}) =  f^\text{lin}(\mathbf{x}_t, \hat{\boldsymbol{\theta}}^\star) \) and
\begin{equation*}
    \tilde{\mathcal{L}}_{\{ \mathbf{z}_t \}}(\mathbf{0}) = \ell(f^\text{lin}(\mathbf{x}_t, \hat{\boldsymbol{\theta}}^\star), \mathbf{y}_t) + \frac{\lambda}{2} \| \hat{\boldsymbol{\theta}}^\star - \boldsymbol{\theta}' \|_2^2 = \hat{\mathcal{L}}_{\{ \mathbf{z}_t \}}(\hat{\boldsymbol{\theta}}^\star).
\end{equation*}
When they are evaluated at \( \Delta \boldsymbol{\alpha}_r^\star \), by Proposition \ref{prop:3.1} and definition of \( f^\text{lin}(\cdot, \boldsymbol{\theta}) \), we have
\begin{equation*}
    \begin{aligned}
        g^\text{lin}(\mathbf{x}_t, \Delta \boldsymbol{\alpha}_r^\star) &= f^\text{lin}(\mathbf{x}_t, \hat{\boldsymbol{\theta}}^\star) + \mathbf{K}(\mathbf{x}_t, \mathbf{X}) (\tilde{\boldsymbol{\alpha}}_r^\star - \boldsymbol{\alpha}^\star) \\
        &= f^\text{lin}(\mathbf{x}_t, \hat{\boldsymbol{\theta}}^\star) + \nabla_{\boldsymbol{\theta}}f(\mathbf{x}_t, \boldsymbol{\theta}') (\hat{\boldsymbol{\theta}}_r^\star - \boldsymbol{\theta}')  - \nabla_{\boldsymbol{\theta}}f(\mathbf{x}_t, \boldsymbol{\theta}') (\hat{\boldsymbol{\theta}}^\star - \boldsymbol{\theta}')  \\
        &= f(\mathbf{x}_t, \boldsymbol{\theta}') + \nabla_{\boldsymbol{\theta}}f(\mathbf{x}_t, \boldsymbol{\theta}') (\hat{\boldsymbol{\theta}}_r^\star -  \boldsymbol{\theta}') \\
        &= f^\text{lin}(\mathbf{x}_t, \hat{\boldsymbol{\theta}}_r^\star),
    \end{aligned}
\end{equation*}
and
\begin{equation*}
    \begin{aligned}
        \tilde{\mathcal{L}}_{\{ \mathbf{z}_t \}}(\Delta \boldsymbol{\alpha}_r^\star) &= \ell(f^\text{lin}(\mathbf{x}_t, \hat{\boldsymbol{\theta}}_r^\star), \mathbf{y}_t) + \frac{\lambda}{2} \| \hat{\boldsymbol{\theta}}^\star + \nabla_{\boldsymbol{\theta}}f(\mathbf{X}, \boldsymbol{\theta}')^\top \Delta \boldsymbol{\alpha}_r^\star - \boldsymbol{\theta}' \|_2^2 \\
        &= \ell(f^\text{lin}(\mathbf{x}_t, \hat{\boldsymbol{\theta}}_r^\star), \mathbf{y}_t) + \frac{\lambda}{2} \| \hat{\boldsymbol{\theta}}_r^\star - \boldsymbol{\theta}' \|_2^2 \\
        & = \hat{\mathcal{L}}_{\{ \mathbf{z}_t \}}(\hat{\boldsymbol{\theta}}_r^\star).
    \end{aligned}
\end{equation*}
Combining these results gives Equations \ref{eq:change in output delta alpha} and \ref{eq:change in loss delta alpha}.

\subsection{Derivation for Equation \ref{eq:change in loss vectorization}}

Evaluating Equation \ref{eq:change in loss delta alpha} at all data points in \( \mathcal{D}_t \) is equivalent to performing the matrix vector product:
\begin{equation*}
    \begin{bmatrix}
       \left( \mathbf{K}(\mathbf{x}_t^{(1)}, \mathbf{X})^\top \nabla_{f^\text{lin}(\mathbf{x}_t^{(1)}, \hat{\boldsymbol{\theta}}^\star)} \hat{\mathcal{L}}_{\{ \mathbf{z}_t^{(1)} \}} + \lambda \mathbf{K} \boldsymbol{\alpha}^\star \right)^\top \\
        \vdots \\
        \left( \mathbf{K}(\mathbf{x}_t^{(|\mathcal{D}_t|)}, \mathbf{X})^\top \nabla_{f^\text{lin}(\mathbf{x}_t^{(|\mathcal{D}_t|)}, \hat{\boldsymbol{\theta}}^\star)} \hat{\mathcal{L}}_{\{ \mathbf{z}_t^{(|\mathcal{D}_t|)} \}} + \lambda \mathbf{K} \boldsymbol{\alpha}^\star \right)^\top
    \end{bmatrix}
    \Delta \boldsymbol{\alpha}_r^\star.
\end{equation*}
Assume \( f^\text{lin} \) has scalar-valued output, the above product can be rewritten as
\begin{equation*}
    \begin{bmatrix}
    \nabla_{f^\text{lin}(\mathbf{x}_t^{(1)}, \hat{\boldsymbol{\theta}}^\star)} \hat{\mathcal{L}}_{\{ \mathbf{z}_t^{(1)} \}} \\
    \vdots \\
    \nabla_{f^\text{lin}(\mathbf{x}_t^{(|\mathcal{D}_t|)}, \hat{\boldsymbol{\theta}}^\star)} \hat{\mathcal{L}}_{\{ \mathbf{z}_t^{(|\mathcal{D}_t|)} \}}
    \end{bmatrix}
    \odot
    \left(
    \begin{bmatrix}
        \mathbf{K}(\mathbf{x}_t^{(1)}, \mathbf{X}) \\
        \vdots \\
        \mathbf{K}(\mathbf{x}_t^{(|\mathcal{D}_t|)}, \mathbf{X})
    \end{bmatrix}
    \Delta \boldsymbol{\alpha}_r^\star \right) + \lambda 
    \begin{bmatrix}
        \boldsymbol{\alpha}^{\star \top} \mathbf{K} \Delta \boldsymbol{\alpha}_r^\star \\
        \vdots \\
        \boldsymbol{\alpha}^{\star \top} \mathbf{K} \Delta \boldsymbol{\alpha}_r^\star
    \end{bmatrix},
\end{equation*}
which gives Equation \ref{eq:change in loss vectorization}. The generalization to vector-valued \( f^\text{lin} \) follows by noticing
\begin{equation*}
    \mathbf{1}^\top \left( \nabla_{f^\text{lin}(\mathbf{x}_t, \hat{\boldsymbol{\theta}}^\star)} \hat{\mathcal{L}}_{\{ \mathbf{z}_t \}} \odot \left(  \mathbf{K}(\mathbf{x}_t, \mathbf{X})\Delta \boldsymbol{\alpha}_r^\star \right) \right) = \nabla_{f^\text{lin}(\mathbf{x}_t, \hat{\boldsymbol{\theta}}^\star)} \hat{\mathcal{L}}_{\{ \mathbf{z}_t \}}^\top \mathbf{K}(\mathbf{x}_t, \mathbf{X})\Delta \boldsymbol{\alpha}_r^\star,
\end{equation*}
where \( \mathbf{1} \in \mathbb{R}^{d_\text{out}} \) denotes the all-ones vector.


\subsection{Derivation for Equation \ref{eq:KGD}}

When full-batch gradient descent (GD) is used to minimize \( \hat{\mathcal{L}}_\mathcal{D}(\boldsymbol{\theta}) \) with \( \boldsymbol{\theta}' \) set to the initial parameters \( \boldsymbol{\theta}^{(0)} \), the GD update rule for \( \boldsymbol{\theta} \) can be written as
\begin{equation}
    \boldsymbol{\theta}^{(k+1)} = \boldsymbol{\theta}^{(k)} - \eta \cdot \left( \nabla_{\boldsymbol{\theta}} f(\mathbf{X}, \boldsymbol{\theta}^{(0)} )^\top \nabla_{f^\text{lin}(\mathbf{X}, \boldsymbol{\theta}^{(k)})} \hat{\mathcal{L}}_\mathcal{D} + \lambda \cdot (\boldsymbol{\theta}^{(k)} - \boldsymbol{\theta}^{(0)} ) \right).
\label{eq:GD}
\end{equation}
By definition of \( f^\text{lin}(\cdot, \boldsymbol{\theta}) \) and Equation \ref{eq:GD}, the difference in the model output at any data point \( \mathbf{x} \), between any two consecutive GD iterations, is given by
\begin{equation}
    \begin{aligned}
        &f^\text{lin}(\mathbf{x}, \boldsymbol{\theta}^{(k+1)}) - f^\text{lin}(\mathbf{x}, \boldsymbol{\theta}^{(k)})\\
        = &-\eta \cdot \left( \nabla_{\boldsymbol{\theta}} f(\mathbf{x}, \boldsymbol{\theta}^{(0)} ) \nabla_{\boldsymbol{\theta}} f(\mathbf{X}, \boldsymbol{\theta}^{(0)} )^\top \nabla_{f^\text{lin}(\mathbf{X}, \boldsymbol{\theta}^{(k)})} \hat{\mathcal{L}}_\mathcal{D} + \lambda \cdot \nabla_{\boldsymbol{\theta}} f(\mathbf{x}, \boldsymbol{\theta}^{(0)} ) (\boldsymbol{\theta} - \boldsymbol{\theta}^{(0)} )\right) \\
        = & - \eta \cdot \left( \mathbf{K}(\mathbf{x}, \mathbf{X}) \nabla_{f^\text{lin}(\mathbf{X}, \boldsymbol{\theta}^{(k)})} \hat{\mathcal{L}}_{\mathcal{D}} + \lambda \cdot (f^\text{lin}(\mathbf{x}, \boldsymbol{\theta}^{(k)}) - f^\text{lin}(\mathbf{x}, \boldsymbol{\theta}^{(0)}))\right).
    \end{aligned}
\label{eq:KGD_finite}
\end{equation}
Evaluating Equation \ref{eq:KGD_finite} on all data points in \( \mathcal{D} \) and taking the limit \( d_{\boldsymbol{\theta}} \to \infty \) yields Equation \ref{eq:KGD}. In addition, Equation \ref{eq:KGD} can be obtained more rigorously by deriving the functional gradient of
\begin{equation*}
    \hat{\mathcal{L}}_\mathcal{D}[\Delta f] \coloneq \frac{1}{|\mathcal{D}|} \sum_{(\mathbf{x}, \mathbf{y}) \in \mathcal{D}} \left( \ell(f_0(\mathbf{x}) + \Delta f(\mathbf{x}), \mathbf{y}) + \frac{\lambda}{2} \| \Delta f \|_{\mathcal{H}}^2 \right)
\end{equation*}
with respect to \( \Delta f(\cdot) \coloneq f (\cdot) - f_0(\cdot) \), where \( \mathcal{H} \) denotes the reproducing kernel Hilbert space (RKHS) induced by \( \mathbf{K}^\infty(\cdot, \cdot) \). Specifically,
\begin{equation}
\label{eq:FGD}
    \begin{aligned}
        \frac{\delta \hat{\mathcal{L}}_\mathcal{D}}{\delta \Delta f} &= \frac{1}{|\mathcal{D}|} \sum_{(\mathbf{x}, \mathbf{y}) \in \mathcal{D}} \left( \frac{\delta \ell(f_0(\mathbf{x}) + \Delta f(\mathbf{x}), \mathbf{y})}{\delta \Delta f} + \frac{\lambda}{2} \frac{\delta \| \Delta f \|_{\mathcal{H}}^2}{\delta \Delta f} \right) \\
        &= \frac{1}{|\mathcal{D}|} \sum_{(\mathbf{x}, \mathbf{y}) \in \mathcal{D}} \left( \mathbf{K}^\infty(\cdot, \mathbf{x}) \nabla_f \ell(f(\mathbf{x}), \mathbf{y})  + \lambda \Delta f \right) \\
        &= \frac{1}{|\mathcal{D}|} \cdot \mathbf{K}^\infty(\cdot, \mathbf{X}) \nabla_{f} \ell(f(\mathbf{X}), \mathbf{Y}) + \lambda (f(\cdot) - f_0(\cdot))
    \end{aligned}
\end{equation}
where we have used \( \frac{\delta \Delta f(\mathbf{x})}{\delta \Delta f} = \mathbf{K}^\infty(\cdot, \mathbf{x}) \) and \( \nabla_{f} \ell(f(\mathbf{X}), \mathbf{Y}) \) is defined as
\begin{equation*}
\nabla_{f} \ell(f(\mathbf{X}), \mathbf{Y}) \coloneq
    \begin{bmatrix}
        \nabla_f\ell(f(\mathbf{x}^{(1)}), \mathbf{y}^{(1)}) \\
        \vdots \\
        \nabla_f\ell(f(\mathbf{x}^{(|\mathcal{D}|)}), \mathbf{y}^{(|\mathcal{D}|)})
    \end{bmatrix} \in \mathbb{R}^{d_\text{out}|\mathcal{D}|}.
\end{equation*}
Multiplying Equation \ref{eq:FGD} by \( -\eta \) gives the update direction for the KGD, and evaluating it at all data points in \( \mathcal{D} \) gives Equation \ref{eq:KGD}.



\end{document}